\documentclass{article}
\usepackage[preprint]{neurips_2026}

\usepackage[utf8]{inputenc}
\usepackage{amsmath,amsthm}
\usepackage[libertine]{newtxmath}
\usepackage{array}
\usepackage{subcaption}
\usepackage{graphicx}
\usepackage{xcolor}
\usepackage{algorithm}
\usepackage{algorithmicx}
\usepackage[noend]{algpseudocode}
\usepackage{booktabs}
\usepackage{colortbl}
\usepackage{arydshln}
\usepackage{hyperref}
\usepackage{makecell}
\usepackage{siunitx}
\usepackage{appendix}
\usepackage{xspace}
\usepackage{enumitem}
\usepackage{rotating}
\usepackage{multirow}
\newcommand{\E}{\mathbb{E}}
\definecolor{pinegreen}{rgb}{0.0, 0.47, 0.44}
\newcommand{\modelname}{\textbf{FedFrozen}\xspace}

\newtheorem{thm}{\textbf{Theorem}}
\newtheorem{col}{\textbf{Corollary}}
\newtheorem{asmp}{\textbf{Assumption}}
\newtheorem{lem}{\textbf{Lemma}}

\newtheorem{prop}{\textbf{Proposition}}
\newtheorem{remark}{\textbf{Remark}}

\DeclareMathOperator*{\argmin}{arg\,min}
\newcommand{\PhiFrozen}{\bar{\Phi}}

\newcommand{\norm}[1]{\left\lVert #1 \right\rVert}
\newcommand{\inner}[2]{\left\langle #1, #2 \right\rangle}

\graphicspath{{./}{./image/}{./figure/}}

\title{FedFrozen: Two-Stage Federated Optimization via Attention Kernel Freezing}
\author{Junye Du\thanks{Equal contribution.} \\
  The University of Hong Kong \\
  \texttt{junyedu@connect.hku.hk}
  \And
  Zhenghao Li\footnotemark[1] \\
  The University of Hong Kong \\
  \texttt{lizh@connect.hku.hk}
  \And
  Yushi Feng \\
  The University of Hong Kong \\
  \texttt{fengys@connect.hku.hk}
  \And
  Long Feng\thanks{Corresponding author.} \\
  The University of Hong Kong \\
  \texttt{lfeng@hku.hk}}
\date{}

\begin{document}
\maketitle

\begin{abstract}
Federated learning with heterogeneous clients remains a significant challenge for deep learning, primarily due to client drift arising from inconsistent local updates. Existing federated optimization methods typically address this issue through objective-level regularization or update-correction mechanisms. Recent studies, however, suggest that Transformer-based architectures may be inherently more robust than conventional models under heterogeneous federated training. Motivated by this observation, we investigate how different parameter components within the attention mechanism influence federated optimization. Specifically, we decompose the attention module into a query/key block, which determines the attention kernel, and a value block, which performs semantic transformation under the induced kernel. Based on this perspective, we propose \modelname, a two-stage federated optimization framework that first performs full-model warm-up training and then freezes the query/key block while continuing to optimize the value block. Under a linear-attention formulation, we show that the warm-up stage can be interpreted as an inexact descent procedure on a regularized kernel-profile objective, while the frozen stage reduces to a restricted value-block optimization problem under a fixed attention kernel. Our analysis further reveals an explicit trade-off that governs the choice of warm-up length. Simulations validate the predicted bias–drift behavior, and real-data experiments demonstrate that \modelname improves both the stability and effectiveness of Transformer models in heterogeneous federated learning.
\end{abstract}

\section{Introduction}
Federated learning (FL) aims to optimize a shared model across decentralized clients while keeping data local. In practice, however, data heterogeneity induces substantial client drift: after multiple local steps, client models may move in directions that are inconsistent with the global objective, which can compromise optimization stability and final accuracy \citep{mcmahan2017communication,li2020convergence,karimireddy2020scaffold}. This issue becomes particularly pronounced for large-scale deep neural networks, where the large number of parameters makes local training more vulnerable to client-specific data heterogeneity.

Existing methods address this issue mainly at the whole-model optimization level. FedProx adds a proximal regularizer to stabilize local training \citep{li2020fedprox}; SCAFFOLD uses control variates to correct client drift \citep{karimireddy2020scaffold}; FedNova uses normalized averaging to correct objective inconsistency arising from heterogeneous local updates \citep{wang2020fednova}; FedDyn dynamically regularizes local objectives to better align local and global optima \citep{acar2021feddyn}; and MOON uses model-level contrastive learning to reduce representation drift under non-IID data \citep{li2021moon}. While these methods improve federated optimization, they mainly operate at the level of the overall training objective and have not specifically investigated how the underlying neural network architecture affects federated training.

This limitation becomes particularly relevant for Transformer-style models \citep{vaswani2017attention}. Compared with conventional feed-forward networks, Transformer architectures are composed of multiple tightly coupled components, including query, key, and value projections in self-attention modules as well as subsequent MLP layers. Recent studies suggest that self-attention-based architectures can be more robust than conventional architectures under heterogeneous federated learning \citep{qu2022rethinking}. However, subsequent work shows that naive federated averaging may negatively affect the self-attention mechanism when client data are heterogeneous \citep{li2022fedtp}. These findings indicate that attention is not merely another layer to be averaged: its internal components may respond differently to local client updates.

Several related works, including block-wise training and parameter-efficient tuning, have considered updating only part of a model, but their focus differs from ours. Personalized FL methods split models into shared and local components \citep{arivazhagan2019federated,collins2021exploiting,oh2022fedbabu}; parameter-efficient tuning methods reduce trainable parameters through adapters or low-rank updates \citep{houlsby2019adapter,hu2022lora}. Recent FL methods further show that updating only part of the network can improve efficiency and optimization behavior. FedPart provides a non-convex convergence analysis for layer-wise partial network updates in federated learning \citep{wang2024fedpart}. SmartFreeze and FedFreeze introduce freezing mechanisms for improving efficiency and training stability, but they do not study the distinct optimization roles of the query/key and value projections \citep{wu2024smartfreeze,wu2026fedfreeze}. In contrast, our analysis instead focuses on the role of different attention components in federated training. Freezing the query/key block fixes the attention kernel and its induced representation geometry, so that later value-block updates take place within a stable representation space.

Motivated by this observation, we propose \modelname, a two-stage federated optimization method for attention-based models. In the first stage, the full attention module is trained for a short warm-up period, allowing the query/key block to learn a useful shared kernel. In the second stage, the query/key block is fixed, and federated training continues by updating and aggregating only the value block. This design reduces the communication cost after warm-up and, more importantly, mitigates the impact of client drift on global model training under heterogeneous data.

Our contributions are summarized as follows. First, we propose \modelname, a two-stage federated optimization framework. Second, to the best of our knowledge, this is the first work to analyze federated linear attention by explicitly decomposing it into a query/key-induced kernel block and a value-based active block. This attention-specific decomposition distinguishes our approach from generic partial-update or layer-freezing methods. Third, we provide a unified theoretical analysis of the two stages. The warm-up phase is shown to perform inexact descent on a regularized kernel-profile objective, whereas the frozen phase becomes a smooth and strongly convex restricted value-block optimization problem under our settings. Finally, relevant simulation results support our theoretical analysis, and real-data experiments demonstrate the effectiveness of \modelname on Transformer models under heterogeneous federated training.

\textbf{Notation:} Let $\| \cdot \|_F$ and $\| \cdot \|_{\mathrm{op}}$ denote the Frobenius and operator norms, respectively. For matrices $A$ and $B$, their Euclidean inner product is defined as $\langle A, B \rangle = \mathrm{tr}(A^\top B)$. For a differentiable matrix-valued function $\mathcal{T}$, let $D\mathcal{T}(X)[U]$ denote its Fr\'echet derivative at $X$ applied to a perturbation direction $U$. Its induced operator norm is defined as $ \|D\mathcal{T}(X)\|_{\mathrm{op}} = \sup_{\|U\|_F=1} \|D\mathcal{T}(X)[U]\|_F.$ For a twice-differentiable scalar-valued function $\mathcal{T}$ with matrix input $X$, we let $\nabla \mathcal{T}(X)$ and $\nabla^2 \mathcal{T}(X)$ denote its gradient and Hessian.

\section{Methodology}
\subsection{Problem Formulation}
We consider a federated learning system with $K$ heterogeneous clients. Let $\mathcal{D}_k$ denote the local data distribution of client $k\in\{1,\dots,K\}$. The standard federated objective is to minimize the weighted global expected risk over parameter space $\mathcal{W}$:
\begin{equation}
\label{eq:global_empirical_risk}
    \min_{w\in\mathcal{W}} \left \{ f(w) := \sum_{k=1}^K p_k F_k(w) \right \},
\end{equation}
where $\{p_k\}_{k=1}^K$ are nonnegative weights summing to $1$, and $F_k(w) := \E_{(x,y)\sim\mathcal{D}_k} \bigl[\ell(\mathcal{M}(x;w),y)\bigr]$ is the local objective of client $k$ for model $\mathcal{M}$ and loss function $\ell$.

While existing federated optimization methods typically treat the model parameter as a uniform whole—for instance, by applying a single regularization term across all weights equally—this approach fails to capture the distinct component roles within Transformer-style architectures. Specifically, attention models can be particularly sensitive to client drift because local updates implicitly disrupt the shared attention patterns. To mitigate this instability, we focus on the distinct roles of the internal components within the self-attention mechanism.

Let $H(x)\in\mathbb{R}^{n\times d}$ denote the input embedding for sample $x$.  To facilitate a rigorous theoretical analysis, we consider a linear-attention model surrogate:
\begin{equation}
\label{eq:attention_module}
    O(H;W^Q,W^K,W^V)=\varphi(HW^Q)\varphi(HW^K)^\top HW^V,
\end{equation}
where $W^Q\in\mathbb{R}^{d\times d_k}$, $W^K\in\mathbb{R}^{d\times d_k}$ and $W^V\in\mathbb{R}^{d\times d_v}$ are the trainable parameters, and $\varphi$ is a positive feature map applied row-wise. We focus on this linear attention model primarily to provide a clear theoretical analysis; algorithmically, however, our proposed \modelname framework can be generalized to standard Transformer models by directly designating the corresponding frozen and active blocks. This specific architecture exposes a natural `division of labor'. The query/key matrices define the attention kernel $A_\Phi(H) = \varphi(HW^Q)\varphi(HW^K)^\top$, while the value block acts as a linear map $V(H;W^V) = HW^V$. We therefore formally decompose the parameter space into a kernel-induced block and an active value block:
\begin{equation*}
    \Phi = (W^Q,W^K),
    \qquad
    \theta = W^V.
\end{equation*}
This decomposition highlights an asymmetry that is absent in standard
whole-model FL analysis. Client drift in $\Phi$ alters the attention kernel and changes how token representations interact. In contrast, once $\Phi$ is fixed, the value block $\theta$ is restricted to learning a linear mapping upon fixed attention features. Consequently, rather than directly optimizing the monolithic objective $f(w)$, our analysis and algorithm design will turn to the decoupled local objective:
\begin{equation}
    F_k(\Phi,\theta) = \mathbb{E}_{(x,y)\sim\mathcal{D}_k} \bigl[ \ell\bigl(O(H(x);\Phi,\theta),y\bigr) \bigr].
\end{equation}
This allows us to explore how selectively freezing the kernel could control heterogeneous client drift by optimizing the following decoupled global objective:
\begin{equation}
    \min_{\Phi,\theta} \left \{ f(\Phi,\theta)  = \sum_{k=1}^K p_k F_k(\Phi,\theta) \right \},  \  \ \sum_{k=1}^K p_k = 1, \ \ p_k \ge 0.
\end{equation}

\subsection{Algorithm Design}
The component-wise decomposition introduced above suggests that directly applying standard FedAvg \citep{mcmahan2017communication} to all attention parameters indiscriminately leaves the model vulnerable to client drift. To address this asymmetry, we propose \modelname, a two-stage federated optimization strategy that stabilizes training by separating kernel adaptation from value-block optimization.

Our algorithm proceeds in two phases. In the first phase, consisting of $T_{\rm warm}$ communication rounds, all clients jointly update both the kernel block $\Phi$ and the active value block $\theta$. The primary role of this full-model warm-up is to allow the attention mechanism to learn a shared, representative kernel before it is fixed. Consequently, a sufficiently long warm-up period helps reduce the future optimization bias that would otherwise arise from premature freezing.

Following the warm-up phase, the server freezes the kernel parameters at $\PhiFrozen := \Phi^{T_{\rm warm}}$ and broadcasts this fixed kernel to all clients. For the remaining training rounds (Phase 2), local training is restricted exclusively to the value block $\theta$ via gradient descent, with $\PhiFrozen$ held constant. While Phase 2 can no longer alter the underlying representation geometry, it efficiently reduces the optimization residual of the active block within a stable, globally shared representation space.

To further stabilize the training and ensure that the restricted optimization problem in Phase 2 is well-conditioned, we introduce an $\ell_2$-regularization penalty specifically on the active value block. The regularized local objective is thus defined as:
\begin{equation}
\label{eq:def_local_regularized}
    F_k^\lambda(\Phi,\theta) := F_k(\Phi,\theta)+\frac{\lambda}{2}\|\theta\|_F^2, \quad \lambda>0.
\end{equation}
Accordingly, the global regularized objective becomes:
\begin{equation}
\label{eq:def_global_regularized}
    f_\lambda(\Phi,\theta) := \sum_{k=1}^K p_k F_k^\lambda(\Phi,\theta) = f(\Phi,\theta)+\frac{\lambda}{2}\|\theta\|_F^2, \quad p_k \ge 0, \ \sum_{k=1}^K p_k = 1.
\end{equation}
Finally, while our subsequent theoretical analysis assumes full client participation for clarity, \modelname naturally accommodates partial participation. Client subsampling can be incorporated by adding standard sampling-variance terms to the analysis, which has no structural impact on our core kernel-freezing mechanism.

\begin{algorithm}[htbp!]
\caption{\modelname: Two-Stage Federated Optimization with $\ell_2$-regularization}
\label{alg:fedfrozen}
\begin{algorithmic}[1]
\Require Client weights $\{p_k\}_{k=1}^K$, $\ell_2$ regularization parameter $\lambda>0$, warm-up rounds $T_{\rm warm}$, total rounds $T$, local step size $\eta$, local steps $E$
\State \textbf{Initialize:} server parameters $(\Phi^0,\theta^0)$
\For{$t = 0, \dots, T_{\rm warm}-1$} \Comment{\textcolor{pinegreen}{Phase 1: full-model warm-up}}
    \State Server broadcasts $(\Phi^t,\theta^t)$ to clients
    \For{each client $k$ \textbf{in parallel}}
        \State Local init: $\Phi_{k,0}^t \gets \Phi^t$, $\theta_{k,0}^t \gets \theta^t$
        \For{$j = 0, \dots, E-1$}
            \State $(\Phi_{k,j+1}^t,\theta_{k,j+1}^t) \gets (\Phi_{k,j}^t,\theta_{k,j}^t) - \eta\nabla F_k^\lambda(\Phi_{k,j}^t,\theta_{k,j}^t)$
        \EndFor
    \EndFor
    \State Server aggregates: $(\Phi^{t+1},\theta^{t+1}) \gets \sum_{k=1}^K p_k (\Phi_{k,E}^t,\theta_{k,E}^t)$
\EndFor
\State Freeze kernel block: $\PhiFrozen \gets \Phi^{T_{\rm warm}}$
\For{$t = T_{\rm warm}, \dots, T-1$} \Comment{\textcolor{pinegreen}{Phase 2: frozen-kernel optimization}}
    \State Server broadcasts active block $\theta^t$ with frozen kernel $\PhiFrozen$
    \For{each client $k$ \textbf{in parallel}}
        \State Local init: $\theta_{k,0}^t \gets \theta^t$
        \For{$j = 0, \dots, E-1$}
            \State $\theta_{k,j+1}^t \gets \theta_{k,j}^t - \eta \nabla_\theta F_k^\lambda(\PhiFrozen,\theta_{k,j}^t)$
        \EndFor
    \EndFor
    \State Server aggregates active block: $\theta^{t+1} \gets \sum_{k=1}^K p_k \theta_{k,E}^t$
\EndFor
\State \textbf{Output:} final federated model $(\PhiFrozen,\theta^{T})$
\end{algorithmic}
\end{algorithm}

\section{Theoretical Analysis}
\label{sec:theory}
In this section, we provide a theoretical analysis of why freezing the query/key block after the warm-up phase could stabilize federated learning optimization. Our analysis proceeds in three steps. First, we introduce a profile objective
$h_\lambda(\Phi)$, which measures the best
regularized objective attainable after fixing the kernel $\Phi$. We show that the
warm-up stage implements an inexact descent procedure on this profile
objective. Second, after the kernel is frozen, we show
that the remaining value-block optimization becomes a smooth and strongly
convex federated problem under the linear-attention model, yielding linear convergence to a neighborhood. Third, we combine the
two phase-wise guarantees into an end-to-end error decomposition that
identifies the trade-off governing the choice of $T_{\rm warm}$.

\subsection{Phase 1: Warm-up as Inexact Descent}
We first analyze the warm-up phase of \modelname. Since the full
attention objective is nonconvex in the query/key block, our goal is not to
establish global convergence of Phase~1.  Instead, the relevant question is
whether warm-up moves the model toward a kernel that will incur a smaller
bias before the query/key block is frozen.  To formalize this effect, we
introduce the regularized kernel-profile objective
\begin{equation}
\label{eq:def_main_notation}
    h_\lambda(\Phi) := \min_\theta f_\lambda(\Phi,\theta),
\end{equation}
which measures the best attainable objective under a fixed kernel $\Phi$.  A smaller value of $h_\lambda(\Phi)$ therefore indicates that the
frozen kernel $\Phi$ supports a better value-block solution.

Based on this profile objective, we define the regularized freezing bias at
round $t$ as
\begin{equation}
\label{eq:def_bias_residual}
    B_{t}^{\lambda}
    :=
    h_\lambda(\Phi^t)-f_\lambda^\star \quad \text{where}\quad f_\lambda^\star := \min_{\Phi,\theta} f_\lambda(\Phi,\theta).
\end{equation}
Before we present the convergence property of Phase~1, we first propose the following assumptions.

\begin{asmp}[Bounded linear-attention assumptions] \
\label{asmp:attention_assumption}
\begin{enumerate}[label=(A1.\arabic*),ref=A1.\arabic*,leftmargin=3em]
    \item\label{asmp:1}
    The Frobenius norm of the input embedding is bounded: $\norm{H}_F \le R$;

    \item\label{asmp:2}
    The loss gradient with respect to the attention output is bounded:
    $\norm{\nabla_O \ell(O,y)}_F \le G_O$;

    \item\label{asmp:3}
    All iterations of the value block satisfy $\norm{\theta}_F \le B_\theta$;

    \item\label{asmp:4}
    The row-wise feature map $\varphi$ is continuously differentiable, bounded, and has a bounded Lipschitz-continuous Jacobian. That is, for any $u,v$:
    \[
        \norm{\varphi(u)}_F \le B_\varphi,
        \qquad
        \norm{D\varphi(u)}_{\rm op} \le J_\varphi,
        \qquad
        \norm{D\varphi(u)-D\varphi(v)}_{\rm op}
        \le
        M_{\varphi} \norm{u-v}_F,
    \]
    where $D$ is the Fr\'echet derivative operator;

    \item\label{asmp:5}
    For every label $y$, the loss function $\ell(O,y)$ is convex and
    $L_O$-smooth with respect to $O$:
    \[
        \ell(O',y)\le \ell(O,y)+\inner{\nabla_O \ell(O,y)}{O'-O}
        +\frac{L_O}{2}\norm{O'-O}_F^2.
    \]
\end{enumerate}
\end{asmp}
Assumption~\ref{asmp:attention_assumption} states common regularity
conditions for the linear-attention model. Conditions
(\ref{asmp:1})--(\ref{asmp:3}) are common boundedness assumptions on the input
embedding $H$, output gradients $\nabla_O\ell$, and value-block $\theta$. Condition~(\ref{asmp:4})
requires the feature map to be continuously differentiable, bounded, and to
have a bounded Lipschitz Jacobian, which is satisfied by commonly used smooth
bounded feature maps, such as sigmoid and softmax-type maps. Condition~(\ref{asmp:5})
assumes that the loss function is convex and smooth with respect to the attention
output $O$. This condition is satisfied by the mean squared error loss and also
covers standard cross-entropy loss for classification with a fixed linear
prediction head.

\begin{asmp}
\label{asmp:global_dissim}
There exist constants $M\ge 0$ and $B\ge 1$ such that
\begin{equation}
\label{eq:global_dissim}
    \sum_{k=1}^K p_k
    \norm{\nabla F_{k}(\Phi,\theta)}_F^2
    \le
    M^2
    +
    B^2
    \norm{\nabla f(\Phi,\theta)}_F^2
\end{equation}
\end{asmp}
Assumption~\ref{asmp:global_dissim} is the bounded gradient dissimilarity
condition used in SCAFFOLD~\citep{karimireddy2020scaffold}, which measures
the difference between aggregated local client gradients and the global gradient under
heterogeneous data. We next state Theorem~\ref{thm1} for the warm-up phase.

\begin{thm}
    \label{thm1}
    For Phase 1 of Algorithm~\ref{alg:fedfrozen}, assume Assumptions~\ref{asmp:attention_assumption} and~\ref{asmp:global_dissim} hold and that there exists a unique $\theta_\lambda^\star(\Phi) := \argmin_\theta f_\lambda(\Phi,\theta)$ for any kernel $\Phi$. Let $e_t := \theta^t-\theta_\lambda^\star(\Phi^t)$. If $\eta < c \lambda/ E$ for some constant $c>0$ and $\lambda \ge C/(2B_\theta^2)$ for some constant $C>0$, then every warm-up round $t \in \{0, 1, \dots, T_{\rm warm}-1\}$ satisfies
    \begin{equation}
        \label{eq:profile_descent_la}
    h_\lambda(\Phi^{t+1})
    \le
    h_\lambda(\Phi^t)
    -
    \frac{3\eta E}{8}
    \norm{\nabla h_\lambda(\Phi^t)}_F^2
    +
    C_e\eta E
    \norm{e_t}_F^2 + \frac{(M^2+C(B^2-1)\lambda)\eta E}{16B^2},
    \end{equation}
    for some constant $C_e>0$. Furthermore, the freezing bias at time $t$ satisfies
    \begin{equation}
        \label{eq:future_bias_reduction}
    B_{t}^{\lambda}
    \le
    B_{0}^{\lambda}
    -
    \frac{3\eta E}{8}
    \sum_{s=0}^{t-1}
    \norm{\nabla h_\lambda(\Phi^s)}_F^2
    +
    C_e\eta E
    \sum_{s=0}^{t-1}
        \norm{e_s}_F^2
    + \frac{(M^2+C(B^2-1)\lambda)\eta E}{16B^2} t.
    \end{equation}
\end{thm}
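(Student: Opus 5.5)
We establish the one-step profile descent \eqref{eq:profile_descent_la} and then telescope it. The first ingredient is a smoothness statement for $h_\lambda$. For fixed $\Phi$, the output $O$ is linear in $\theta$ and $\ell$ is convex in $O$ by (\ref{asmp:5}), so $f_\lambda(\Phi,\cdot)$ is $\lambda$-strongly convex. By Danskin's theorem together with the assumed uniqueness of $\theta_\lambda^\star(\Phi)$,
\[
\nabla h_\lambda(\Phi)=\nabla_\Phi f_\lambda(\Phi,\theta_\lambda^\star(\Phi)) .
\]
Assumptions (\ref{asmp:1})--(\ref{asmp:4}) bound the first and second Fréchet derivatives of $\Phi\mapsto \varphi(HW^Q)\varphi(HW^K)^\top HW^V$ on the set $\norm{\theta}_F\le B_\theta$, using $R$, $B_\varphi$, $J_\varphi$, $M_\varphi$. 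Chaining these bounds with $G_O$ and $L_O$ yields a joint smoothness constant $L$ for $f$, and hence for $f_\lambda$ with constant $L+\lambda$. The implicit function theorem gives that $\theta_\lambda^\star$ is $(L/\lambda)$-Lipschitz, so $h_\lambda$ is $L_h$-smooth with $L_h \lesssim L + L^2/\lambda$.

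Next we write one round as an inexact gradient step. The averaged update is $\Phi^{t+1}-\Phi^t=-\eta\sum_k p_k\sum_{j<E}\nabla_\Phi F_k(\Phi_{k,j}^t,\theta_{k,j}^t)$, which we decompose as
\[
\Phi^{t+1}-\Phi^t=-\eta E\bigl[\nabla h_\lambda(\Phi^t)+r_t\bigr].
\]
The residual $r_t$ has two pieces. The first is $\nabla_\Phi f(\Phi^t,\theta^t)-\nabla_\Phi f(\Phi^t,\theta_\lambda^\star(\Phi^t))$, bounded by $L\norm{e_t}_F$. The second is the client-drift term $\frac1E\sum_{k,j}p_k[\nabla_\Phi F_k(\text{local})-\nabla_\Phi F_k(\Phi^t,\theta^t)]$, bounded by $L$ times the average local displacement.

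The drift is controlled in the standard SCAFFOLD fashion. Unrolling the local recursion gives
\[
\sum_k p_k\norm{w_{k,j}-w^t}^2\le 2j^2\eta^2\Bigl(\sum_k p_k\norm{\nabla F_k(w^t)}^2+\lambda^2B_\theta^2\Bigr)
\]
when $\eta LE\le 1/2$. Assumption~\ref{asmp:global_dissim} then replaces the gradient sum by $M^2+B^2\norm{\nabla f(w^t)}^2$. We bound $\norm{\nabla f(w^t)}^2\lesssim \norm{\nabla h_\lambda(\Phi^t)}^2+L^2\norm{e_t}^2+\lambda^2B_\theta^2$ and use the condition $\lambda\ge C/(2B_\theta^2)$ to absorb the $\lambda B_\theta$ contributions into the term $C(B^2-1)\lambda$.

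Applying the $L_h$-smoothness descent inequality gives
\[
h_\lambda(\Phi^{t+1})\le h_\lambda(\Phi^t)-\eta E\norm{\nabla h}^2-\eta E\inner{\nabla h}{r_t}+\tfrac{L_h}{2}\eta^2E^2\norm{\nabla h+r_t}^2 .
\]
We apply Young's inequality to the cross term as $\inner{\nabla h}{r_t}\ge-\tfrac14\norm{\nabla h}^2-\norm{r_t}^2$. The step-size condition $\eta<c\lambda/E$ makes $\eta E L_h\lesssim c(\lambda L+L^2)/\lambda\cdot\lambda$ small, which is why the condition scales with $\lambda$: $L_h$ contains $L^2/\lambda$. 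With this, the quadratic term costs at most $\tfrac18\eta E\norm{\nabla h}^2$ plus a multiple of $\eta E\norm{r_t}^2$. This leaves the coefficient $-\tfrac38\eta E$ on $\norm{\nabla h}^2$ after the drift's $\norm{\nabla h}^2$ contribution (carrying factor $\eta^2E^2L^2B^2$) is absorbed by choosing $c$ small. The remaining terms collect into $C_e\eta E\norm{e_t}^2$ and the heterogeneity constant $(M^2+C(B^2-1)\lambda)\eta E/(16B^2)$, with $c$ tuned so the prefactor is $1/(16B^2)$.

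For the second claim we subtract $f_\lambda^\star$ from both sides and sum over $s=0,\dots,t-1$ to obtain \eqref{eq:future_bias_reduction} directly.

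The main obstacle is the constant bookkeeping in the drift bound: the $M^2/B^2$ normalization and the $\lambda$-dependent term must come out exactly as stated while the $\norm{\nabla h}$ coefficient stays at $3/8$. If the $\lambda$-dependence does not fit, the first thing I would try is rescaling $c$ as $c\propto 1/(B\,L)$.
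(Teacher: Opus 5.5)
\textbf{Verdict: there is a genuine gap, and it is where you expected trouble — the heterogeneity constant.}

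You bound the local displacement through $\|\nabla F_k(w^t)\|_F+\lambda B_\theta$. You then bound $\|\nabla f(w^t)\|_F^2$ by $\|\nabla h_\lambda(\Phi^t)\|_F^2$, $\|e_t\|_F^2$ and an extra $\lambda^2B_\theta^2$. Together these leave an additive term of order $\eta E\cdot\eta^2E^2L^2(1+B^2)\lambda^2B_\theta^2$ with no factor $(B^2-1)$. So it cannot be absorbed into $C(B^2-1)\lambda/(16B^2)$. For identical clients ($M=0$, $B=1$) the theorem has no additive term at all, while your bound keeps one.

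The proposed absorption also runs the wrong way. The hypothesis $\lambda\ge C/(2B_\theta^2)$ is equivalent to $C\lambda\le 2\lambda^2B_\theta^2$. That bounds $C(B^2-1)\lambda$ from above by a multiple of $\lambda^2B_\theta^2$, not from below.

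The missing idea is that the ridge gradient $(0,\lambda\theta^t)$ is identical for every client, so it adds nothing to heterogeneity. Unroll the local recursion with the regularized gradients, $\|w_{k,j}-w^t\|_F\le\frac{(1+\eta L)^j-1}{L}\|\nabla F_k^\lambda(w^t)\|_F$, with $L$ the smoothness constant of $F_k^\lambda$ on the $B_\theta$-ball. Then combine Assumption~\ref{asmp:global_dissim} with $\|\nabla f_\lambda\|_F^2=\|\nabla f\|_F^2+2\lambda\langle\nabla_\theta f,\theta\rangle+\lambda^2\|\theta\|_F^2$ to obtain
\[
\sum_k p_k\|\nabla F_k^\lambda(w^t)\|_F^2\le M^2+B^2\|\nabla f_\lambda(w^t)\|_F^2+(1-B^2)\bigl(2\lambda\langle\nabla_\theta f(w^t),\theta^t\rangle+\lambda^2\|\theta^t\|_F^2\bigr).
\]
Drop the nonpositive $(1-B^2)\lambda^2\|\theta^t\|_F^2$ and bound the cross term by $2(B^2-1)\lambda B_\varphi^2RG_OB_\theta$. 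This is exactly $C(B^2-1)\lambda$ with $C=2B_\theta B_\varphi^2RG_O$.

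Next, use $\nabla f_\lambda(\Phi^t,\theta_\lambda^\star(\Phi^t))=(\nabla h_\lambda(\Phi^t),0)$ to get $\|\nabla f_\lambda(w^t)\|_F^2\le 2\|\nabla h_\lambda(\Phi^t)\|_F^2+2(L+\lambda)^2\|e_t\|_F^2$ with no constant. In other words, work with $\nabla f_\lambda$ throughout, never with $\nabla f$. Taking $\epsilon=1/(4B)$ in $\eta LE\le\epsilon/(1+\epsilon)$ then bounds the drift term $b_t$ by
\[
\|b_t\|_F^2\le\frac{M^2+C(B^2-1)\lambda}{16B^2}+\tfrac18\|\nabla h_\lambda(\Phi^t)\|_F^2+\text{(a multiple of }\|e_t\|_F^2).
\]
Plug this into $h_\lambda(\Phi^{t+1})\le h_\lambda(\Phi^t)-\frac{\eta E}{2}\|\nabla h_\lambda(\Phi^t)\|_F^2+\eta E(\|r_t\|_F^2+\|b_t\|_F^2)$, which holds when $\eta EL_h\le 1$; here $r_t$ is only the tracking error. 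This gives both the $1/(16B^2)$ prefactor and the coefficient $3/8$.

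\textbf{A second, smaller issue: you give $\lambda\ge C/(2B_\theta^2)$ the wrong job.} With $C=2B_\theta B_\varphi^2RG_O$ it reads $\lambda\ge B_\varphi^2RG_O/B_\theta$. Combined with the first-order condition $\lambda\theta_\lambda^\star(\Phi)=-\nabla_\theta f(\Phi,\theta_\lambda^\star(\Phi))$ and $\|\nabla_\theta f\|_F\le B_\varphi^2RG_O$, it yields $\|\theta_\lambda^\star(\Phi)\|_F\le B_\theta$ for every $\Phi$.

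You need this fact. Your smoothness constants hold only on $\{\|\theta\|_F\le B_\theta\}$, because the $\Phi$-derivative of $O=\varphi(HW^Q)\varphi(HW^K)^\top H\theta$ is linear in $\theta$. Assumption (\ref{asmp:3}) covers the iterates but not $\theta_\lambda^\star$. Yet you evaluate those constants at $\theta_\lambda^\star(\Phi^t)$ in three places: the tracking bound $L\|e_t\|_F$, the Lipschitz bound for $\theta_\lambda^\star$, and the smoothness of $h_\lambda$.

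Also, the implicit function theorem would need second derivatives of $\ell$, which (\ref{asmp:5}) does not provide. The $(L/\lambda)$-Lipschitz bound on $\theta_\lambda^\star$ follows directly from the $\lambda$-strong monotonicity of $\nabla_\theta f_\lambda(\Phi,\cdot)$ instead. The remaining steps — Danskin, $L_h$ of order $L+L^2/\lambda$, and the final telescoping — match the paper.
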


Theorem~\ref{thm1} should be interpreted as an inexact descent guarantee for
the kernel profile. The first $\|e_t\|_F^2$ measures
how far the current value block is from the optimal value block associated
with the current kernel, while the second term captures heterogeneity-induced client drift. In the homogeneous case, where
local gradients are equal to the global gradient, one can take $M=0$ and
$B=1$, and thus the second error term vanishes. As such, Phase 1 should not be regarded as guaranteed monotone improvement for arbitrary warm-up length; rather, it provides useful descent when the profile decrease outweighs tracking and heterogeneity errors.

Additionally assume that for every iterated kernel $\Phi^t$, the minimizer $\theta^\star(\Phi^t) := \argmin_\theta f(\Phi^t,\theta)$ exists and satisfies $\|\theta^\star(\Phi^t)\|_F\le B_\theta$. Then we have the following Corollary~\ref{col1}.

\begin{col}
\label{col1}
    Under the conditions of Theorem~\ref{thm1}, let $h(\Phi) = h_0(\Phi)$ denote the unregularized profile for kernel $\Phi$, and let $f^\star = f^\star_0$ be the original global optimum. Then the global freezing bias $B_t = h(\Phi^t) - f^\star$ for $t \in \{1, 2, \dots, T_{\rm warm}\}$ satisfies
    \begin{equation}
    B_t \le B_{0} - \frac{3\eta E}{8} \sum_{s=0}^{t-1} \norm{\nabla h_\lambda(\Phi^s)}_F^2 + C_e\eta E \sum_{s=0}^{t-1} \norm{e_s}_F^2 + \frac{(M^2+C(B^2 -1)\lambda)\eta E}{16B^2} t  + \frac\lambda2 B_\theta^2,
    \end{equation}
\end{col}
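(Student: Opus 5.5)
We derive Corollary~\ref{col1} from the bias bound \eqref{eq:future_bias_reduction} of Theorem~\ref{thm1} by sandwiching the unregularized bias $B_t = h(\Phi^t)-f^\star$ between regularized quantities. Concretely, we will show
\begin{equation*}
B_t \le B_t^\lambda, \qquad B_0^\lambda \le B_0 + \frac{\lambda}{2}B_\theta^2 .
\end{equation*}
Substituting both into \eqref{eq:future_bias_reduction} then gives the claimed inequality. The sums over $\|\nabla h_\lambda(\Phi^s)\|_F^2$, $\|e_s\|_F^2$ and the heterogeneity term are carried over verbatim, since the corollary keeps them in regularized form.

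\emph{First inequality.} Since $f_\lambda(\Phi,\theta)\ge f(\Phi,\theta)$ pointwise, taking minima over $\theta$ gives $h_\lambda(\Phi)\ge h(\Phi)$ for every $\Phi$. For the optimum, use the added hypothesis that the unregularized minimizer $\theta^\star(\Phi)$ exists and satisfies $\|\theta^\star(\Phi)\|_F\le B_\theta$. For any $\Phi$ this gives
\begin{equation*}
h_\lambda(\Phi)\le f(\Phi,\theta^\star(\Phi))+\frac{\lambda}{2}B_\theta^2 = h(\Phi)+\frac{\lambda}{2}B_\theta^2 .
\end{equation*}
Thus $0\le h_\lambda-h\le \frac{\lambda}{2}B_\theta^2$ on the kernels where the hypothesis holds. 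Monotonicity of minima gives $f_\lambda^\star \ge f^\star$. Combining, $B_t = h(\Phi^t)-f^\star \le h_\lambda(\Phi^t)-f^\star_\lambda + (f^\star_\lambda - f^\star)$.

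\emph{The main obstacle} is the term $f_\lambda^\star-f^\star$. Its sign is favourable for $B_0^\lambda$, since $-f_\lambda^\star\le -f^\star$, but unfavourable for $B_t$. I will handle it by routing the slack in the other direction. Write
\begin{equation*}
B_t = h(\Phi^t)-f^\star \le h_\lambda(\Phi^t) - f^\star
\end{equation*}
and keep $f^\star$ throughout, rather than converting to $B_t^\lambda$. Then apply the first part of Theorem~\ref{thm1}, the per-round descent \eqref{eq:profile_descent_la}, telescoped from $s=0$ to $t-1$. This bounds $h_\lambda(\Phi^t)$ by $h_\lambda(\Phi^0)$ minus the descent sum, plus the tracking and heterogeneity sums; telescoping involves only $h_\lambda$ values, with no optimum subtracted. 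Finally, bound $h_\lambda(\Phi^0)-f^\star \le h(\Phi^0)+\frac{\lambda}{2}B_\theta^2 - f^\star = B_0+\frac{\lambda}{2}B_\theta^2$ using the upper sandwich at $\Phi^0$.

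This yields exactly the stated bound, with one $\frac{\lambda}{2}B_\theta^2$ term. The only care needed is that the hypothesis on $\theta^\star(\Phi^t)$ is required for the iterated kernels; $t=0$ is included in the assumption, so the argument applies at $\Phi^0$.
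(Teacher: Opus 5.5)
Your final argument is correct and is essentially the paper's proof: the paper bounds $B_t-B_0=h(\Phi^t)-h(\Phi^0)\le h_\lambda(\Phi^t)-h_\lambda(\Phi^0)+\frac{\lambda}{2}B_\theta^2$, using $h\le h_\lambda$ at $\Phi^t$ and $h_\lambda(\Phi^0)\le h(\Phi^0)+\frac{\lambda}{2}\norm{\theta^\star(\Phi^0)}_F^2$, and then controls $h_\lambda(\Phi^t)-h_\lambda(\Phi^0)=B_t^\lambda-B_0^\lambda$ by Theorem~\ref{thm1}, so $f^\star$ cancels exactly as in your rerouted version. Your opening plan $B_t\le B_t^\lambda$ does not hold in general (because $f_\lambda^\star\ge f^\star$), so you are right to discard it, and the bound on $\theta^\star(\cdot)$ is in fact needed only at $\Phi^0$.
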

Corollary~\ref{col1} gives the corresponding freezing-bias bound for the
original objective. Compared with the regularized guarantee in
Theorem~\ref{thm1}, it incurs only the additional regularization error
$\lambda B_\theta^2/2$.

\subsection{Phase 2: Frozen-kernel optimization}
We now turn to the second phase of \modelname. Once the kernel block is frozen at $\PhiFrozen = \Phi^{T_{\rm warm}}$, all
clients share the same attention kernel and only update the value block
$\theta$. Accordingly, we define the restricted local and global objective as
\begin{equation*}
    g_{\PhiFrozen,k}^\lambda(\theta) = F_k^\lambda(\PhiFrozen,\theta),
    \qquad
    g_{\PhiFrozen}^\lambda(\theta) = \sum_{k=1}^K p_k g_{\PhiFrozen,k}^\lambda(\theta).
\end{equation*}
For the analysis in this subsection, we re-index Phase~2 rounds by
$s=0,1,\dots$ and write $\theta^0$ for the value block at the freezing round.
The following theorem gives a theoretical guarantee for this frozen-kernel objective.
\begin{thm}
\label{thm:restricted_descent_relative}
Under Assumption~\ref{asmp:attention_assumption}, if $\eta \le c \sqrt{\lambda}/E$ for some constant $c > 0$, then every Phase~2 round satisfies
\begin{equation}
         g_{\PhiFrozen}^\lambda(\theta^{s+1}) \le g_{\PhiFrozen}^\lambda(\theta^{s}) - \frac{3}{8}  \eta E \norm{\nabla g_{\PhiFrozen}^\lambda(\theta^s) }_F^2 + \frac{C^2\eta E}{2},
\end{equation}
for some constant $C>0$. Consequently,
\begin{equation}
        \frac{1}{S}\sum_{s=0}^{S-1}\norm{\nabla g_{\PhiFrozen}^\lambda (\theta^s)}_F^2 \le \frac{8(g_{\PhiFrozen}^\lambda(\theta^{0}) - h_\lambda(\PhiFrozen))}{3 \eta E S} + \frac{4C^2}{3}
\end{equation}
\end{thm}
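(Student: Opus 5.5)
The plan is a standard one-round descent argument for local GD with $E$ steps on a smooth restricted objective, with client drift controlled by a uniform bound on local gradients.

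\textbf{Step 1: properties of the restricted objective.} With $\PhiFrozen$ fixed, the output $O(H;\PhiFrozen,\theta)=A_{\PhiFrozen}(H)H\theta$ is linear in $\theta$. By (\ref{asmp:1}) and (\ref{asmp:4}) we have $\|A_{\PhiFrozen}(H)\|_{\rm op}\le B_\varphi^2$, so the linear map $\theta\mapsto A_{\PhiFrozen}(H)H\theta$ has operator norm at most $B_\varphi^2R$. Combining this with the convexity and $L_O$-smoothness in (\ref{asmp:5}), each $g^\lambda_{\PhiFrozen,k}$ is $\lambda$-strongly convex and $L$-smooth with $L:=L_OB_\varphi^4R^2+\lambda$. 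Moreover, (\ref{asmp:2}) and (\ref{asmp:3}) give the uniform bound
\[
\|\nabla g^\lambda_{\PhiFrozen,k}(\theta)\|_F\le G_OB_\varphi^2R+\lambda B_\theta=:G
\]
along all iterates. No dissimilarity assumption is needed, because boundedness already controls drift.

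\textbf{Step 2: one-round decomposition.} Write
\[
\theta^{s+1}=\theta^s-\eta E\,\nabla g^\lambda_{\PhiFrozen}(\theta^s)-\eta\sum_{j}\sum_k p_k\bigl(\nabla g_k(\theta^s_{k,j})-\nabla g_k(\theta^s)\bigr).
\]
Let $\Delta$ denote the last sum. Smoothness gives $\|\theta^s_{k,j}-\theta^s\|_F\le \eta jG$, hence $\|\Delta\|_F\le L\eta^2E^2G/2$.

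\textbf{Step 3: apply the descent lemma.} Applying the descent lemma to $g^\lambda_{\PhiFrozen}$ gives
\[
g(\theta^{s+1})\le g(\theta^s)-\eta E\|\nabla g\|^2+\eta\langle\nabla g,-\Delta\rangle\ldots+\tfrac L2\|\eta E\nabla g+\eta\Delta\|^2 .
\]
Bound the cross term by Young's inequality, $\eta\|\nabla g\|\|\Delta\|\le \tfrac{\eta E}{8}\|\nabla g\|^2+\tfrac{2\eta}{E}\|\Delta\|^2$. The quadratic term is at most $L\eta^2E^2\|\nabla g\|^2+L\eta^2\|\Delta\|^2$. Choosing $\eta E L\le 1/2$ absorbs the gradient part, leaving a coefficient of at least $3/8$ in front of $\eta E\|\nabla g\|^2$. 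The leftover terms are of order $\eta E\cdot L^2\eta^2E^2G^2$.

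The main obstacle is matching the stepsize scaling $\eta\le c\sqrt\lambda/E$ to this requirement. The quantity $L$ contains the $\lambda$-free constant $L_OB_\varphi^4R^2$, so I would let $c$ absorb the constants in $L$. I would also use $\lambda\le$ a fixed constant, or alternatively $\lambda$ in $G$, so that the residual $L^2\eta^2E^2G^2\le L^2c^2\lambda G^2$ becomes a constant, defined as $C^2/2$. If the $\sqrt\lambda$ scaling is intended to cancel a $1/\lambda$ from strong convexity, I would instead bound drift via $\|\theta^s_{k,j}-\theta^s\|$ with $\lambda$-dependent constants. In either case the form of the statement only requires some constant $C$.

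\textbf{Step 4: telescoping.} Sum the per-round inequality over $s=0,\dots,S-1$ to get
\[
\tfrac38\eta E\sum_s\|\nabla g\|^2\le g(\theta^0)-g(\theta^S)+\tfrac{C^2\eta E S}{2}.
\]
Then use $g(\theta^S)\ge\min_\theta g^\lambda_{\PhiFrozen}=h_\lambda(\PhiFrozen)$. Dividing by $\tfrac38\eta ES$ yields the stated average-gradient bound with the term $\tfrac{4C^2}{3}$.
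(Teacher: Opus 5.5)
Your proof is correct, but it controls client drift by a different mechanism than the paper.

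\textbf{Your route.} You bound every local gradient uniformly along the trajectory, $\norm{\nabla g^\lambda_{\PhiFrozen,k}(\theta)}_F\le C_1G_O+\lambda B_\theta=:G$ with $C_1=B_\varphi^2R$. This relies on (A1.3) holding for the local iterates. You then get a drift term of order $L\eta E^2G$, plus a cross term handled by Young's inequality.

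\textbf{The paper's route.} Its Phase~2 argument does not use (A1.3). It derives an intrinsic dissimilarity bound
$\sum_k p_k\norm{\nabla g^\lambda_{\PhiFrozen,k}(\theta)}_F^2\le (M_g^\lambda)^2+(B_g^\lambda)^2\norm{\nabla g^\lambda_{\PhiFrozen}(\theta)}_F^2$
from two facts: the Bregman/co-coercivity inequality for each convex smooth $g^\lambda_{\PhiFrozen,k}$, and the PL inequality of the $\lambda$-strongly convex $g^\lambda_{\PhiFrozen}$. This gives $(B_g^\lambda)^2=2(L_{\theta\theta}+\lambda)/\lambda$, and $(M_g^\lambda)^2\le 8(C_1G_O)^2$ via first-order optimality at $\theta^\star_\lambda(\PhiFrozen)$. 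The drift is then bounded by $\epsilon\sqrt{(M_g^\lambda)^2+(B_g^\lambda)^2\norm{\nabla g^\lambda_{\PhiFrozen}}_F^2}$ with $\epsilon=1/(2B_g^\lambda)$. Plugging this into $-\tfrac{\eta E}{2}\norm{\nabla g^\lambda_{\PhiFrozen}}_F^2+\tfrac{\eta E}{2}\norm{b^s_{\theta,\lambda}}_F^2$ gives the factor $3/8$ and residual $\eta E(C_1G_O)^2/2$. The requirement $L_g^\lambda\eta E\le\epsilon/(1+\epsilon)$ with $\epsilon\propto\sqrt\lambda$ is exactly where $\eta\le c\sqrt\lambda/E$ comes from: it offsets the $1/\lambda$ inside $(B_g^\lambda)^2$.

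\textbf{What each buys.} Your route is more elementary. It needs only $L\eta E\le 1/2$, so for you the $\sqrt\lambda$ scaling is just a stronger-than-necessary hypothesis. It also gives a residual of order $\eta E(L\eta E)^2G^2$, which is third order in the step size. The paper's route gives $C=C_1G_O$, free of $\lambda$ and $B_\theta$. This matters for Corollary~\ref{col:restricted_linear}, which sets $\lambda_0=6C/(7B_\theta)$; with your $C$, which contains $\lambda B_\theta$, that definition becomes implicit. The paper's route also needs no a priori bound on the iterates, and this is what backs the claim that the frozen-kernel structure itself controls heterogeneity. Your bounded-gradient argument would control drift just as well in Phase~1, so it shows nothing specific to freezing.

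\textbf{Bookkeeping.} If $\Delta$ is defined without the prefactor $\eta$, as Step~3 uses it, then $\norm{\Delta}_F\le L\eta E^2G/2$, not $L\eta^2E^2G/2$. Also, the bound $\norm{\theta^s_{k,j}-\theta^s}_F\le\eta jG$ follows from the gradient bound, not from smoothness. With these corrections the leftover is $\tfrac{5\eta}{2E}\norm{\Delta}_F^2\le\tfrac58\eta E(L\eta E)^2G^2\le\tfrac{5}{32}\eta EG^2$, which matches the order you stated.
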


Theorem~\ref{thm:restricted_descent_relative} provides an optimization
guarantee for one round of Phase~2 update under a fixed kernel. Since the frozen-kernel objective is strongly convex with respect to the value block under the conditions of Theorem~\ref{thm:restricted_descent_relative}, we have the following Corollary~\ref{col:restricted_linear}.

\begin{col}
    \label{col:restricted_linear}
    Assume Assumption~\ref{asmp:attention_assumption} holds. Let $g_{\PhiFrozen}(\theta) =  g_{\PhiFrozen}^0(\theta)$ and $h(\Phi) = h_0(\Phi)$, and let $S = T - T_{\rm warm}$ be the total number of Phase~2 rounds. Then the following inequality holds:
  \begin{equation}
    \begin{aligned}
        g_{\PhiFrozen}(\theta^S) - h(\PhiFrozen) &\le \left(1-\frac{3}{4}\lambda_0 \eta E \right)^S \left( g_{\PhiFrozen}^{\lambda_0}(\theta^0) - h_{\lambda_0}(\PhiFrozen) \right) + C B_\theta
    \end{aligned}
    \end{equation}
    for $\eta \le \min\{4/(3\lambda_0 E), c\sqrt{\lambda_0}/E \}$, where $\lambda_0 := (6C)/(7B_\theta)$ and $C,c$ are certain positive constants.
\end{col}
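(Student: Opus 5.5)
The plan is to run Phase~2 on the regularized objective with $\lambda=\lambda_0$. We then use the one-round descent inequality of Theorem~\ref{thm:restricted_descent_relative}, convert it into a linear contraction via strong convexity, and finally pass from the regularized gap back to the unregularized gap $g_{\PhiFrozen}(\theta^S)-h(\PhiFrozen)$.

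\textbf{Step 1: strong convexity and the PL inequality.} Since $\ell(\cdot,y)$ is convex by (\ref{asmp:5}) and $O(H;\PhiFrozen,\theta)$ is linear in $\theta$ once $\PhiFrozen$ is fixed, each $g_{\PhiFrozen,k}$ is convex in $\theta$. The added $\frac{\lambda_0}{2}\norm{\theta}_F^2$ therefore makes $g^{\lambda_0}_{\PhiFrozen}$ $\lambda_0$-strongly convex, and its minimum value is $h_{\lambda_0}(\PhiFrozen)$. This gives the Polyak--\L{}ojasiewicz inequality
\[
\norm{\nabla g^{\lambda_0}_{\PhiFrozen}(\theta)}_F^2\ \ge\ 2\lambda_0\bigl(g^{\lambda_0}_{\PhiFrozen}(\theta)-h_{\lambda_0}(\PhiFrozen)\bigr).
\]

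\textbf{Step 2: contraction of the regularized gap.} Write $\Delta_s:=g^{\lambda_0}_{\PhiFrozen}(\theta^s)-h_{\lambda_0}(\PhiFrozen)$. Substituting the PL inequality into the descent bound of Theorem~\ref{thm:restricted_descent_relative}, which applies since $\eta\le c\sqrt{\lambda_0}/E$, yields
\[
\Delta_{s+1}\le\Bigl(1-\tfrac34\lambda_0\eta E\Bigr)\Delta_s+\tfrac{C^2\eta E}{2}.
\]
The condition $\eta\le 4/(3\lambda_0E)$ makes the contraction factor nonnegative, so the recursion can be unrolled as a geometric series:
\[
\Delta_S\le\Bigl(1-\tfrac34\lambda_0\eta E\Bigr)^S\Delta_0+\frac{C^2\eta E/2}{\tfrac34\lambda_0\eta E}
=\Bigl(1-\tfrac34\lambda_0\eta E\Bigr)^S\Delta_0+\frac{2C^2}{3\lambda_0}.
\]

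\textbf{Step 3: removing the regularization.} First, $g_{\PhiFrozen}(\theta^S)\le g^{\lambda_0}_{\PhiFrozen}(\theta^S)$. Second, using the assumed minimizer $\theta^\star(\PhiFrozen)$ with $\norm{\theta^\star(\PhiFrozen)}_F\le B_\theta$,
\[
h_{\lambda_0}(\PhiFrozen)\le g_{\PhiFrozen}(\theta^\star(\PhiFrozen))+\tfrac{\lambda_0}{2}B_\theta^2=h(\PhiFrozen)+\tfrac{\lambda_0}{2}B_\theta^2.
\]
Combining these with Step 2 gives
\[
g_{\PhiFrozen}(\theta^S)-h(\PhiFrozen)\le\Bigl(1-\tfrac34\lambda_0\eta E\Bigr)^S\Delta_0+\frac{2C^2}{3\lambda_0}+\frac{\lambda_0B_\theta^2}{2}.
\]

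\textbf{Step 4: tuning $\lambda_0$.} The two residual terms are balanced by choosing $\lambda_0$ of order $C/B_\theta$, at which point both are of order $CB_\theta$. The specific value $\lambda_0=6C/(7B_\theta)$ gives residual $\frac{7}{9}CB_\theta+\frac37 CB_\theta\approx1.21\,CB_\theta$, not $CB_\theta$. The stated constant should be recovered by absorbing the numerical factor into $C$, which only rescales $c$ and the admissible step size. The exact choice $6/7$ likely reflects the residual constant carried through Theorem~\ref{thm:restricted_descent_relative} (e.g.\ the $C^2\eta E/2$ term and the factor $3/8$); I would recompute it from that theorem's precise constants.

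\textbf{Main obstacle.} The delicate step is Step 2. It requires that the constant $C$ in Theorem~\ref{thm:restricted_descent_relative} is uniform in $\lambda$, or at least does not degrade as $\lambda_0$ shrinks. It also requires that the iterates remain in the region where (\ref{asmp:3}) holds, so the smoothness constants used in that theorem are valid. I would first check the theorem's drift bound to confirm that $C$ depends only on $R,B_\varphi,G_O,L_O,B_\theta$.
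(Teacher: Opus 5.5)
Your argument matches the paper's proof essentially line for line. Both use the PL inequality from $\lambda_0$-strong convexity, the contraction $\Delta_{s+1}\le(1-\frac{3}{4}\lambda_0\eta E)\Delta_s+\frac{C'^2\eta E}{2}$, the geometric sum giving $\frac{2C'^2}{3\lambda_0}$, and the de-regularization cost $\frac{\lambda_0}{2}B_\theta^2$. Here $C'$ is the constant of Theorem~\ref{thm:restricted_descent_relative}, which the paper takes to be $C_1G_O$; it is independent of $\lambda$, so your uniformity worry does not arise.

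The only loose end is Step~4. The corollary's $C$ is not $C'$ but $C:=\frac{7}{6}C'$. Then $\lambda_0=\frac{6C}{7B_\theta}=\frac{C'}{B_\theta}$, and the residual is exactly $\frac{2}{3}C'B_\theta+\frac{1}{2}C'B_\theta=\frac{7}{6}C'B_\theta=CB_\theta$. Simply ``absorbing'' your factor $\frac{76}{63}$ into $C$ does not work, because $\lambda_0$ is defined through the same $C$ and would change with it.
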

Corollary~\ref{col:restricted_linear} shows that the original Phase~2 objective converges to a neighborhood of radius $C B_\theta$. Notably, this guarantee does not rely on the heterogeneity condition in Assumption~\ref{asmp:global_dissim},  indicating that, under the
linear-attention model and Assumption~\ref{asmp:attention_assumption}, freezing the query/key block itself provides a
structural mechanism for controlling heterogeneity.

\subsection{End-to-end Decomposition and Warm-up Trade-off}
We now combine the two phase-wise guarantees to characterize how the warm-up length should be chosen. Instead of directly setting $T_{\rm warm}$,  we now consider an arbitrary candidate freezing round $\tau\in\{0,\dots,T-1\}$ so that the corresponding freezing bias and $\theta$ block residual are defined by:
\begin{equation}
\label{eq:def_twarm_bias_residual}
    B_\tau^\lambda := h_\lambda(\Phi^\tau)-f_\lambda^\star,
    \qquad
    R_\tau^\lambda := f_\lambda(\Phi^\tau,\theta^\tau)-h_\lambda(\Phi^\tau).
\end{equation}
Let $\theta_{\tau,S}$ denote the value block obtained
after running $S$ rounds of Phase~2 initialized at $\theta^\tau$ with the
kernel $\Phi^\tau$ frozen.  We write the final parameter as
\begin{equation*}
        w_{\tau,T} := (\Phi^\tau,\theta_{\tau,T-\tau}).
\end{equation*}
The following proposition gives an end-to-end error decomposition for  candidate
freezing round $\tau$.
\begin{prop}
\label{prop:e2e_certificate}
Assume the conditions of Theorem~\ref{thm1} and
Corollary~\ref{col:restricted_linear}. Let $\rho_\lambda := 1-\frac{3}{4}\lambda\eta E$, and further assume $(\Phi^\star, \theta^\star) := \argmin_{(\Phi, \theta)} f(\Phi,\theta)$ exists and $\norm{\theta^\star}_F \le B_\theta$. Then, for every candidate freezing round $\tau\in\{0,\dots,T-1\}$,
\begin{equation}
    f(w_{\tau,T})-f^\star \le B_\tau^{\lambda_0} +
    \rho_{\lambda_0}^{T-\tau}R_\tau^{\lambda_0} + C B_\theta,
\end{equation}
where $\lambda_0 := (6C)/(7B_\theta)$ and $C>0$ is a positive constant defined in Corollary~\ref{col:restricted_linear}.
\end{prop}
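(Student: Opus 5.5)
The plan is to split the excess risk into a frozen-kernel bias term and a Phase~2 optimization residual:
\[
f(w_{\tau,T})-f^\star = \bigl(g_{\Phi^\tau}(\theta_{\tau,T-\tau}) - h(\Phi^\tau)\bigr) + \bigl(h(\Phi^\tau)-f^\star\bigr).
\]
Here $g_{\Phi^\tau}(\theta)=f(\Phi^\tau,\theta)$ is the unregularized restricted objective with kernel $\Phi^\tau$ frozen, and $h=h_0$ is the unregularized profile. Since $w_{\tau,T}=(\Phi^\tau,\theta_{\tau,T-\tau})$, we have $f(w_{\tau,T})=g_{\Phi^\tau}(\theta_{\tau,T-\tau})$, so this identity is exact. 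I then bound the two brackets separately, in each case passing to the $\lambda_0$-regularized quantities $B_\tau^{\lambda_0}$ and $R_\tau^{\lambda_0}$ defined in \eqref{eq:def_twarm_bias_residual}.

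\textbf{First bracket.} I apply Corollary~\ref{col:restricted_linear} with $\PhiFrozen=\Phi^\tau$, initial value block $\theta^0=\theta^\tau$, and $S=T-\tau$ rounds. Nothing in that corollary depends on how $\PhiFrozen$ was produced, so it holds for any candidate freezing round $\tau$. This gives
\[
g_{\Phi^\tau}(\theta_{\tau,T-\tau})-h(\Phi^\tau)\le \rho_{\lambda_0}^{T-\tau}\bigl(g^{\lambda_0}_{\Phi^\tau}(\theta^\tau)-h_{\lambda_0}(\Phi^\tau)\bigr) + C B_\theta .
\]
By definition, $g^{\lambda_0}_{\Phi^\tau}(\theta^\tau)=f_{\lambda_0}(\Phi^\tau,\theta^\tau)$, so the factor multiplying $\rho_{\lambda_0}^{T-\tau}$ is exactly $R_\tau^{\lambda_0}$. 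The step-size conditions are inherited from the assumptions of the proposition.

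\textbf{Second bracket.} I need $h(\Phi^\tau)-f^\star\le B_\tau^{\lambda_0}$ up to terms that can be absorbed. The upper comparison is immediate: $h(\Phi^\tau)=\min_\theta f(\Phi^\tau,\theta)\le \min_\theta f_{\lambda_0}(\Phi^\tau,\theta)=h_{\lambda_0}(\Phi^\tau)$, because the penalty is nonnegative. For the lower comparison I use the assumed minimizer $(\Phi^\star,\theta^\star)$ with $\|\theta^\star\|_F\le B_\theta$:
\[
f^\star_{\lambda_0}\le f_{\lambda_0}(\Phi^\star,\theta^\star)\le f^\star+\tfrac{\lambda_0}{2}B_\theta^2 .
\]
Combining the two gives $h(\Phi^\tau)-f^\star\le B_\tau^{\lambda_0}+\tfrac{\lambda_0}{2}B_\theta^2$. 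With $\lambda_0=6C/(7B_\theta)$, the extra term equals $\tfrac{3}{7}CB_\theta$.

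\textbf{Main obstacle.} Adding the two brackets naively gives $B_\tau^{\lambda_0}+\rho_{\lambda_0}^{T-\tau}R_\tau^{\lambda_0}+\tfrac{10}{7}CB_\theta$, which is a constant larger than the stated $CB_\theta$. I would handle this through the constant bookkeeping inside Corollary~\ref{col:restricted_linear}. The specific choice $\lambda_0=6C/(7B_\theta)$ suggests that its neighborhood term already has the form $C'B_\theta$ with an explicit split, namely a drift part plus the conversion cost $\tfrac{\lambda_0}{2}B_\theta^2$ from the regularized to the unregularized restricted objective. My first attempt is to re-derive that tail as (drift $=\tfrac{4}{7}CB_\theta$) $+\,\tfrac{\lambda_0}{2}B_\theta^2$, and to observe that in the end-to-end chain the regularization conversion only needs to be paid once. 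Concretely, I would bound $f(w_{\tau,T})\le f_{\lambda_0}(w_{\tau,T})$ and stay in the regularized world throughout:
\[
f_{\lambda_0}(w_{\tau,T})-f^\star_{\lambda_0}=\bigl(g^{\lambda_0}_{\Phi^\tau}(\theta_{\tau,T-\tau})-h_{\lambda_0}(\Phi^\tau)\bigr)+B_\tau^{\lambda_0}.
\]
I then pay $\tfrac{\lambda_0}{2}B_\theta^2$ once, through $f^\star_{\lambda_0}\le f^\star+\tfrac{\lambda_0}{2}B_\theta^2$. Under that accounting the total additive constant is $(\tfrac{4}{7}+\tfrac{3}{7})CB_\theta=CB_\theta$, which matches the statement. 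If the corollary's constant does not split this way, the fallback is to rename the constant, which the phrase ``$C$ is a constant'' permits.
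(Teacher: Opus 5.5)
Your final chain in the last paragraph is exactly the paper's proof: bound $f(w_{\tau,T})\le f_{\lambda_0}(w_{\tau,T})$, split $f_{\lambda_0}(w_{\tau,T})-f^\star_{\lambda_0}$ into the regularized Phase~2 residual plus $B_\tau^{\lambda_0}$, use the regularized contraction from inside the proof of Corollary~\ref{col:restricted_linear} (tail $\tfrac{2C'^2}{3\lambda_0}=\tfrac{4}{7}CB_\theta$ with $C'=\tfrac{6}{7}C$), and pay $\tfrac{\lambda_0}{2}B_\theta^2=\tfrac{3}{7}CB_\theta$ once via $f^\star_{\lambda_0}\le f^\star+\tfrac{\lambda_0}{2}B_\theta^2$; so the proposal is correct. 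You should lead with that route rather than the unregularized split, which pays the conversion twice as you noticed, and drop the ``rename the constant'' fallback: $C$ is fixed by Corollary~\ref{col:restricted_linear} and also determines $\lambda_0$, so renaming it would change the statement.
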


Proposition~\ref{prop:e2e_certificate} decomposes the final error into freezing and residual bias. Then the oracle choice of the warm-up length is obtained
by minimizing only the following term:
\begin{equation}
    T_{\rm warm}^{\rm or}
    \in
    \argmin_{\tau\in\{0,\dots,T-1\}}
    \left\{
        B_\tau^{\lambda_0}
        +
        \rho_{\lambda_0}^{T-\tau}R_\tau^{\lambda_0}
    \right\}.
\end{equation}
This decomposition highlights a trade-off in choosing the warm-up length
$\tau$ under our \modelname framework. If $\tau$ is too small, the query/key block $\Phi$ is frozen before
learning a reliable attention kernel, leading to a large freezing bias
$B_\tau^\lambda$. However, when client heterogeneity is strong, an overly large
$\tau$ can accumulate drift and tracking errors during the inexact warm-up
process, thereby leading to a less effective frozen kernel. Meanwhile, since the frozen
value-block optimization converges quickly, it is still necessary to reserve
enough Phase~2 rounds after freezing, suggesting that $\tau$ should balance
kernel quality and post-freezing value-block convergence.

Indeed, the terms $B_\tau^{\lambda_0}$ and $R_\tau^{\lambda_0}$ depend on
the profile objective and the unknown optimum, and are therefore not available
during training. In practice, we treat $T_{\rm warm}$ as a hyperparameter and
select it by validation over a small set of candidate warm-up ratios.

\section{Experiments}
\subsection{Simulation Results}
In this section, we conduct simulation experiments to validate our theoretical
analysis. We consider a federated linear-attention task with $K=10$ clients, where all clients collaboratively train a single shared global model. To generate non-IID client data, we directly sample token embeddings
$H\in\mathbb{R}^{n\times d}$ instead of raw inputs. For client $k$, each
example is sampled from $H\sim\mathcal{N}(\mu_k,I_d)$, where $\mu_k\sim\mathcal{N}(0,\rho^2 I_d)$. The parameter $\rho$ controls the degree
of data heterogeneity: $\rho=0$ gives identical input distributions across
clients, while larger $\rho$ induces stronger covariate shift and hence more
severe client drift during local training. The results are shown below:

\begin{figure}[htbp!]
    \centering
    \includegraphics[width=\textwidth]{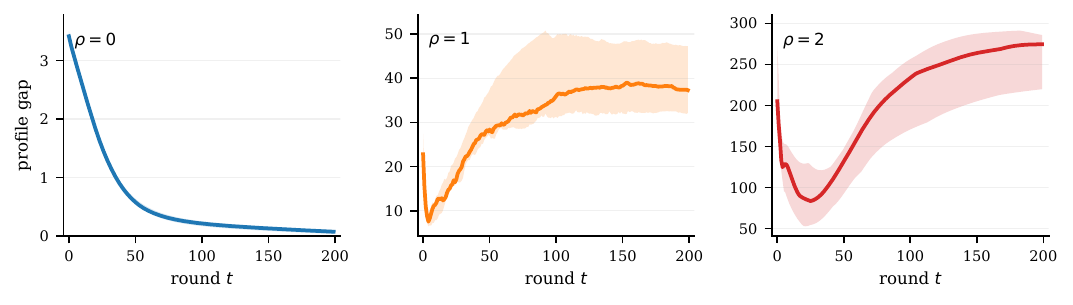}
    \caption{
    Simulation results for Phase-1 profile descent under different client
    heterogeneity levels. From left to right: profile objective
    $h_\lambda(\Phi^t)$ for $\rho=0$, $\rho=1$, and $\rho=2$.
    }
    \label{fig:sim_phase1_profile}
\end{figure}

\begin{figure}[htbp!]
    \centering
    \includegraphics[width=\textwidth]{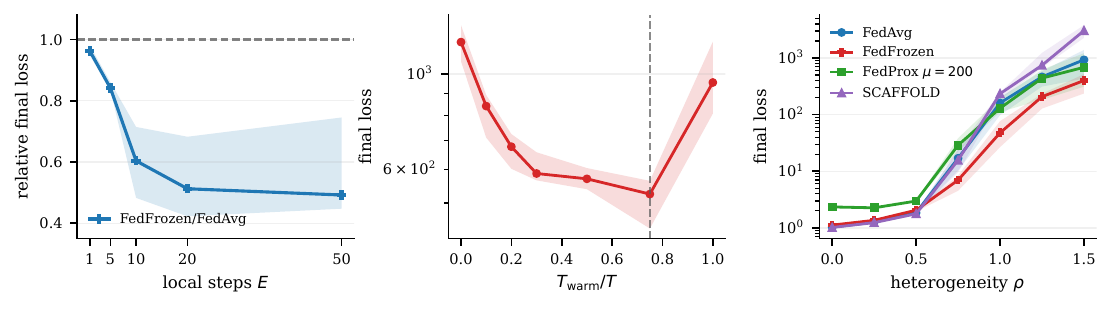}
    \caption{
    Simulation results for local-update sensitivity, warm-up selection, and
    robustness to client heterogeneity. From left to right: relative final
    loss of \modelname against FedAvg as the number of local steps $E$
    increases, bias--drift trade-off over warm-up length, and sensitivity to
    $\rho$. Values below one in the left panel indicate lower loss than FedAvg.
    }
    \label{fig:sim_main_results}
\end{figure}

Figure~\ref{fig:sim_phase1_profile} presents the trend of profile objective
$h_\lambda(\Phi^t)$ under different heterogeneity levels $\rho$ during the warm-up phase. In the homogeneous
case where $\rho=0$, $h_\lambda(\Phi^t)$ decreases steadily, showing that warm-up phase
continues to improve the shared attention kernel. In contrast, under stronger
heterogeneity, $h_\lambda(\Phi^t)$ decreases initially but later increases,
and this effect becomes more evident as $\rho$ grows. This trend aligns with the theoretical guarantee in Theorem~\ref{thm1} where we stated that Phase~1 behaves as an inexact descent process whose progress can eventually be offset by tracking error and heterogeneity-induced client drift.

Figure~\ref{fig:sim_main_results} validates the effectiveness of our two-stage design. The left panel reports the final-loss ratio of \modelname to FedAvg \citep{mcmahan2017communication} as the number
of local steps $E$ increases. Larger $E$ amplifies client drift in standard local
training, and the decreasing ratio shows that the advantage of \modelname
becomes more pronounced when local updates are stronger. The middle
panel evaluates different choices of $T_{\rm warm}/T$. Freezing too early
leads to a poorly trained attention kernel, while freezing too late keeps the
query/key block exposed to heterogeneous local updates for more rounds and
leaves fewer rounds for value-block optimization, which is consistent with Proposition~\ref{prop:e2e_certificate}. The right panel compares \modelname with FedAvg
\citep{mcmahan2017communication}, FedProx \citep{li2020fedprox}, and SCAFFOLD
\citep{karimireddy2020scaffold} as the heterogeneity level $\rho$ increases.
While the baselines are competitive in the nearly homogeneous region, their final losses deteriorate more rapidly under stronger heterogeneity.  In contrast, \modelname remains
substantially more stable, supporting the claim that freezing the query/key
kernel mitigates a structurally important source of client drift in federated
attention training.

\subsection{Real Data Results}
We further evaluate \modelname on three real-world image classification
benchmarks: CIFAR-10 and CIFAR-100 \citep{krizhevsky2009learning}, and
FEMNIST from LEAF \citep{caldas2018leaf}. To simulate statistical
heterogeneity across clients, we construct class-wise Dirichlet non-IID
partitions with concentration parameter $\alpha\in\{0.1,0.3,0.5\}$, where
smaller $\alpha$ corresponds to stronger label-distribution skew. For CIFAR-10
and CIFAR-100, we use ImageNet-pretrained ViT-B/32 backbones
\citep{dosovitskiy2021image} with dataset-specific classification heads. For
FEMNIST, we use an ImageNet-pretrained ViT-Small/16 backbone with a 62-way
classification head. All methods are evaluated using the same client
partitions, pretrained initialization, preprocessing pipeline, communication
budget, and local training budget.

We compare \modelname with representative federated optimization baselines,
including FedAvg, FedProx, SCAFFOLD, FedAvgM, FedAdam, and FedNova. Table~\ref{tab:real_data_results} reports the test accuracy under different
heterogeneity levels. Overall, \modelname delivers stable and competitive
performance across all real-data benchmarks. While the best baseline differs
across settings, \modelname consistently remains comparable to the strongest
methods and avoids severe degradation under challenging non-IID partitions.
These results indicate that query/key freezing after warm-up can preserve
predictive performance while improving the robustness of federated Transformer
training.

\begin{table}[htbp!]
\centering
\scriptsize
\setlength{\tabcolsep}{2.5pt}
\caption{Test accuracy on real-data benchmarks under class-wise Dirichlet non-IID partitions. Smaller $\alpha$ indicates stronger label heterogeneity.}
\label{tab:real_data_results}
\resizebox{\textwidth}{!}{%
\arrayrulecolor{black!25}
\begin{tabular}{@{}lccccccccc@{}}
\arrayrulecolor{black}
\toprule
Method & \multicolumn{3}{c}{CIFAR-10}
& \multicolumn{3}{c}{FEMNIST}
& \multicolumn{3}{c}{CIFAR-100} \\
\cmidrule(lr){2-4}\cmidrule(lr){5-7}\cmidrule(l){8-10}
& $\alpha=0.1$ & $\alpha=0.3$ & $\alpha=0.5$
& $\alpha=0.1$ & $\alpha=0.3$ & $\alpha=0.5$
& $\alpha=0.1$ & $\alpha=0.3$ & $\alpha=0.5$ \\
\midrule
\begin{tabular}[c]{@{}l@{}}FedAvg\\ \citep{mcmahan2017communication}\end{tabular}
& 0.9070 & 0.9270 & 0.9385 & 0.7880 & 0.8070 & 0.8100 & 0.6450 & 0.6725 & 0.6860 \\
\arrayrulecolor{black!25}\hdashline\arrayrulecolor{black}
\begin{tabular}[c]{@{}l@{}}FedProx\\ \citep{li2020fedprox}\end{tabular}
& 0.9150 & 0.9395 & 0.9510 & 0.7900 & 0.8100 & 0.8080 & 0.6160 & 0.6545 & 0.6580 \\
\arrayrulecolor{black!25}\hdashline\arrayrulecolor{black}
\begin{tabular}[c]{@{}l@{}}SCAFFOLD\\ \citep{karimireddy2020scaffold}\end{tabular}
& 0.9295 & 0.9475 & 0.9560 & 0.7300 & 0.7580 & 0.7700 & 0.3880 & 0.4480 & 0.4730 \\
\arrayrulecolor{black!25}\hdashline\arrayrulecolor{black}
\begin{tabular}[c]{@{}l@{}}FedAvgM\\ \citep{hsu2019measuringeffectsnonidenticaldata}\end{tabular}
& 0.6015 & 0.6540 & 0.7030 & 0.7540 & 0.7700 & 0.7930 & 0.4875 & 0.4865 & 0.4905 \\
\arrayrulecolor{black!25}\hdashline\arrayrulecolor{black}
\begin{tabular}[c]{@{}l@{}}FedAdam\\ \citep{reddi2021adaptivefederatedoptimization}\end{tabular}
& 0.7445 & 0.8930 & 0.9105 & 0.7580 & 0.7710 & 0.7860 & 0.4135 & 0.5015 & 0.4870 \\
\arrayrulecolor{black!25}\hdashline\arrayrulecolor{black}
\begin{tabular}[c]{@{}l@{}}FedNova\\ \citep{wang2020fednova}\end{tabular}
& 0.9155 & 0.9260 & 0.9440 & 0.8060 & 0.8070 & 0.8160 & 0.6380 & 0.6735 & 0.6835 \\
\arrayrulecolor{black!25}\hdashline\arrayrulecolor{black}
\addlinespace[1.5pt]
\modelname
& 0.9040 & 0.9325 & 0.9470 & 0.7890 & 0.8100 & 0.8120 & 0.6690 & 0.6790 & 0.6955 \\
\arrayrulecolor{black}
\bottomrule
\end{tabular}%
}
\end{table}

\begin{table}[htbp!]
\centering
\scriptsize
\setlength{\tabcolsep}{2.5pt}
\caption{Lowest measured communication-cost ratios of \modelname relative to FedAvg across pretrained Transformer architectures. Lower ratios indicate greater communication savings.}
\label{tab:fedfrozen_comm_cost_top5}
\resizebox{\textwidth}{!}{%
\arrayrulecolor{black!25}
\begin{tabular}{@{}lccccc@{}}
\arrayrulecolor{black}
\toprule
Metric
& \begin{tabular}[c]{@{}c@{}}ViT-B/32\\ \citep{dosovitskiy2021image}\end{tabular}
& \begin{tabular}[c]{@{}c@{}}T5-Small\\ \citep{raffel2020exploring}\end{tabular}
& \begin{tabular}[c]{@{}c@{}}BART-Base\\ \citep{lewis2020bart}\end{tabular}
& \begin{tabular}[c]{@{}c@{}}ELECTRA-Base\\ \citep{clark2020electra}\end{tabular}
& \begin{tabular}[c]{@{}c@{}}BERT-Base\\ \citep{devlin2019bert}\end{tabular} \\
\midrule
QK Params (\%)
& 16.10 & 10.10 & 9.81 & 13.02 & 12.95 \\
\arrayrulecolor{black!25}\hdashline\arrayrulecolor{black}
QKV Params (\%)
& 24.15 & 15.16 & 14.72 & 19.53 & 19.42 \\
\arrayrulecolor{black!25}\hdashline\arrayrulecolor{black}
\addlinespace[1.5pt]
Cost Ratio
& 0.8712 & 0.8752 & 0.8780 & 0.8959 & 0.8964 \\
\arrayrulecolor{black}
\bottomrule
\end{tabular}%
}
\end{table}

In addition to predictive performance, \modelname also improves communication
efficiency. Regularization-based methods such as FedProx usually have
communication costs comparable to FedAvg, while SCAFFOLD incurs roughly twice
the communication cost of FedAvg due to the introduction of additional control
variates. In contrast, the communication reduction of \modelname comes from the
model architecture itself: architectures with a larger
query/key parameter ratio can achieve greater communication reduction after
freezing these components. Table~\ref{tab:fedfrozen_comm_cost_top5} reports the communication-cost
ratios of \modelname relative to FedAvg on several commonly used pretrained
Transformer architectures. We observe that \modelname reduces the communication
cost by at least $10\%$ across these architectures, showing that the proposed
freezing strategy can provide practical communication savings without adding
extra correction variables or changing the local training objective.

\section{Conclusion}
In this paper, we introduced \modelname, a two-stage federated optimization framework designed to mitigate client drift in attention-based models. Theoretically, we established that the full-model warm-up acts as an inexact descent on a kernel-profile objective, while the subsequent frozen stage efficiently solves a restricted value-block optimization problem. This analysis explicitly characterizes the fundamental trade-off among kernel quality, freezing bias, and the accumulation of client drift. Empirically, evaluations on both simulated environments and real-world benchmarks demonstrate that \modelname enhances training robustness under data heterogeneity, while concurrently offering practical reductions in communication costs. While our current theoretical guarantees are grounded in a linear-attention formulation, the empirical success of \modelname on standard Transformers highlights its broader applicability. Future work will focus on extending our analytical framework to standard nonlinear attention mechanisms and exploring the potential of component-wise freezing strategies for the distributed fine-tuning of large language models (LLMs).

\bibliographystyle{plainnat}
\bibliography{reference}

\newpage
\appendix
\setcounter{thm}{0}
\setcounter{col}{0}
\setcounter{lem}{0}
\setcounter{defn}{0}
\setcounter{prop}{0}
\setcounter{remark}{0}

\begin{center}
    \interlinepenalty=10000
    {\LARGE \bf Appendix}
\end{center}
In the appendix, we provide the mathematical proofs for all theorems presented in the paper, followed by additional experimental results.

\section{Proofs}
\subsection{Proof of Theorem~\ref{thm1}}
\begin{lem}\label{lm0}
    Define $Z(H; \Phi) = \varphi(HW^Q)\varphi(HW^K)^\top H$. Assume Assumption~\ref{asmp:attention_assumption} holds. There exists $C_1,C_2,C_3,C_4>0$ such that:
        \begin{enumerate}[label=(1.\arabic*),leftmargin=3em]
            \item
            $
            \left\| Z(H; \Phi)\right\|_F \le C_1 ,
            $
            \item
            $
            \left\|Z(H; \Phi) - Z(H; \Phi')\right\|_F \le C_2\left\|\Phi-\Phi'\right\|_F ,
            $
            \item
            $
            \left\| D_\Phi O(H;\Phi,\theta)\right\|_{\mathrm{op}} \le C_3 ,
            $
            \item
            $
            \left\| D_\Phi O(H;\Phi,\theta) - D_\Phi O(H;\Phi',\theta)\right\|_{\mathrm{op}} \le C_4\left\|\Phi-\Phi'\right\|_F
            $
        \end{enumerate}
    hold for any kernels $\Phi,\Phi'$ and any value block $\theta$ satisfying $\norm{\theta}_F \le B_\theta$.
\end{lem}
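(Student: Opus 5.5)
The plan is to write $Z(H;\Phi)=P(\Phi)\,H$ with $P(\Phi):=\varphi(HW^Q)\varphi(HW^K)^\top$, so that $O(H;\Phi,\theta)=Z(H;\Phi)\theta$. Each claim then reduces to bounding $P$ and its derivative, using submultiplicativity $\norm{AB}_F\le\norm{A}_F\norm{B}_F$. All bounds come from (\ref{asmp:1}), (\ref{asmp:3}) and (\ref{asmp:4}).

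For (1.1), I read $\norm{\varphi(\cdot)}_F\le B_\varphi$ in (\ref{asmp:4}) as bounding the full row-wise image of the matrix argument. If it only bounds a single row, I replace $B_\varphi$ by $\sqrt n B_\varphi$. This gives $\norm{Z}_F\le B_\varphi^2 R=:C_1$.

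For (1.2), I split the difference as
\[
Z(H;\Phi)-Z(H;\Phi')=\bigl[\varphi(HW^Q)-\varphi(HW'^Q)\bigr]\varphi(HW^K)^\top H+\varphi(HW'^Q)\bigl[\varphi(HW^K)-\varphi(HW'^K)\bigr]^\top H .
\]
By the mean value inequality and $\norm{D\varphi}_{\rm op}\le J_\varphi$,
\[
\norm{\varphi(HW^Q)-\varphi(HW'^Q)}_F\le J_\varphi\norm{H(W^Q-W'^Q)}_F\le J_\varphi R\norm{W^Q-W'^Q}_F,
\]
and the same holds for the key term. Combining the two pieces with $\norm{W^Q-W'^Q}_F+\norm{W^K-W'^K}_F\le\sqrt2\norm{\Phi-\Phi'}_F$ gives $C_2=\sqrt2\,J_\varphi B_\varphi R^2$.

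For (1.3), I differentiate $O=\varphi(HW^Q)\varphi(HW^K)^\top H\theta$ in direction $U=(U^Q,U^K)$:
\[
D_\Phi O[U]=\bigl(D\varphi(HW^Q)[HU^Q]\bigr)\varphi(HW^K)^\top H\theta+\varphi(HW^Q)\bigl(D\varphi(HW^K)[HU^K]\bigr)^\top H\theta .
\]
Each term is at most $J_\varphi R\norm{U}_F\cdot B_\varphi\cdot R\cdot B_\theta$, so $C_3=\sqrt2\,J_\varphi B_\varphi R^2B_\theta$.

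For (1.4), I take the difference of this expression at $\Phi$ and $\Phi'$ and telescope each product, changing one factor at a time. A changed Jacobian factor contributes at most $M_\varphi R\norm{W-W'}_F\cdot R\norm{U}_F$ by the Lipschitz Jacobian. A changed $\varphi$ factor contributes at most $J_\varphi R\norm{W-W'}_F$. The unchanged factors are bounded by $B_\varphi$, $J_\varphi R\norm{U}_F$, $R$ and $B_\theta$. Summing the four resulting pieces gives $C_4\lesssim (M_\varphi B_\varphi+J_\varphi^2)R^3B_\theta$, up to a factor of $2$.

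I expect the main obstacle to be bookkeeping rather than ideas, in three places. First, the convention that (\ref{asmp:4}) bounds the whole row-wise image, or else carrying the $\sqrt n$ factor. Second, applying the mean value inequality in operator form to matrix-valued maps, which is justified because $\varphi$ is $C^1$. Third, checking that the Jacobian of the row-wise map is evaluated at $HW$ with a chain-rule factor $H$ that is bounded by $R$. If the row-wise convention causes trouble, I will state all bounds with explicit $n$-dependence absorbed into the constants.
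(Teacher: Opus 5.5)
Your proposal is correct and matches the paper's proof essentially step for step. It uses the same submultiplicative bound $B_\varphi^2R$ for (1.1); the paper likewise reads (A1.4) as bounding the whole matrix $\varphi(HW^Q)$. It uses the same two-term split with the mean value inequality for (1.2), giving $C_2=\sqrt2\,B_\varphi J_\varphi R^2$. And it uses the same one-factor-at-a-time telescoping into four terms for (1.4), giving $C_4=\sqrt2\,(M_\varphi B_\varphi+J_\varphi^2)R^3B_\theta$.

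The only cosmetic difference is in (1.3). There the paper does not expand the derivative; it bounds $\|D_\Phi Z[U]\|_F\le C_2\|U\|_F$ via the Lipschitz bound (1.2) and multiplies by $B_\theta$. This yields exactly your constant $C_3=C_2B_\theta$.
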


\begin{proof}
    For (1.1),
    \begin{equation}
        \begin{aligned}
            \left\|Z(H;\Phi)\right\|_F & = \left\|\varphi(HW^Q)\varphi(HW^K)^\top H\right\|_F \\
            &\le \left\| \varphi(HW^Q)\right\|_F \cdot \left\|\varphi(HW^K)\right\|_F \cdot \left\|H\right\|_F \\
            &\le B_\varphi^2R \\
            &\triangleq C_1.
        \end{aligned}
    \end{equation}
    For (1.2), let $\Phi' = (W^{Q'}, W^{K'})$,
    \begin{equation}
        \begin{aligned}
            \left\|Z(H;\Phi)-Z(H;\Phi')\right\|_F  &= \left\|\varphi(HW^Q)\varphi(HW^K)^\top H - \varphi(HW^{Q'})\varphi(HW^{K'})^\top H\right\|_F \\
            &\le \left\|\left(\varphi(HW^Q) - \varphi(HW^{Q'})\right)\varphi(HW^K)^\top H\right\|_F \\
            & \quad + \left\|\varphi(HW^{Q'})\left(\varphi(HW^K) - \varphi(HW^{K'})\right)^\top H\right\|_F \\
            &\le \left\|\varphi(HW^Q) - \varphi(HW^{Q'})\right\|_F\left\|\varphi(HW^K)\right\|_F\left\|H\right\|_F \\
            &\quad + \left\|\varphi(HW^{Q'})\right\|_F\left\|\varphi(HW^K) - \varphi(HW^{K'})\right\|_F\left\|H\right\|_F \\
            &\le \left\|D\varphi\right\|_{\mathrm{op}}\left\|HW^Q-HW^{Q'}\right\|_F\left\|\varphi(HW^K)\right\|_F\left\|H\right\|_F \\
            &\quad + \left\|\varphi(HW^{Q'})\right\|_F\left\|D\varphi\right\|_{\mathrm{op}}\left\|HW^K-HW^{K'}\right\|_F\left\|H\right\|_F \\
            &\le B_\varphi J_\varphi R^2\left(\left\|W^Q-W^{Q'}\right\|_F + \left\|W^K-W^{K'}\right\|_F\right) \\
            &\le \sqrt{2}B_\varphi J_\varphi R^2\left\| \Phi-\Phi'\right\|_F \\
            &\triangleq C_2\left\|\Phi-\Phi'\right\|_F.
        \end{aligned}
    \end{equation}
    For (1.3),
    \begin{equation}
        \begin{aligned}
            \left\| D_\Phi O(H;\Phi,\theta)\right\|_{\mathrm{op}} & = \mathop{\sup}_{\|U\|_F=1}\left\|D_\Phi Z(H;\Phi)[U]\cdot\theta\right\|_F \\
            &\le \mathop{\sup}_{\|U\|_F=1}\left\|D_\Phi Z(H;\Phi)[U]\right\|_F \cdot \left\|\theta\right\|_F \\
            &\le C_2B_\theta \\
            &\triangleq C_3.
        \end{aligned}
    \end{equation}
    For (1.4),
    \begin{equation}
        \begin{aligned}
            &\left\| D_\Phi O(H;\Phi,\theta) - D_\Phi O(H;\Phi',\theta)\right\|_{\mathrm{op}}\\
            =& \left\| D_\Phi \left(\varphi(HW^Q)\varphi(HW^K)^\top H\theta\right) - D_\Phi\left(\varphi(HW^{Q'})\varphi(HW^{K'})^\top H\theta\right)\right\|_{\mathrm{op}} \\
            \le &\left\| D_\Phi \left(\varphi(HW^Q)\varphi(HW^K)^\top H\theta\right) - D_\Phi\left(\varphi(HW^{Q'})\varphi(HW^{K})^\top H\theta\right)\right\|_{\mathrm{op}} \\
            & + \left\| D_\Phi \left(\varphi(HW^{Q'})\varphi(HW^K)^\top H\theta\right) - D_\Phi\left(\varphi(HW^{Q'})\varphi(HW^{K'})^\top H\theta\right)\right\|_{\mathrm{op}} \\
            =& \mathop{\sup}_{\|U\|_F =1}\left\| D_\Phi \left(\left(\varphi(HW^Q) - \varphi(HW^{Q'})\right)\varphi(HW^K)^\top H\theta\right)[U]\right\|_F \\
            & +\mathop{\sup}_{\|U\|_F =1} \left\| D_\Phi \left(\varphi(HW^{Q'})\left(\varphi(HW^K)-\varphi(HW^{K'})\right)^\top H\theta\right)[U]\right\|_F \\
            \le&  \mathop{\sup}_{\|U\|_F =1}\left\|D_\Phi\left(\varphi(HW^Q) - \varphi(HW^{Q'})\right)[U]\cdot \varphi(HW^K)^\top H\theta \right\|_F \\
            & + \mathop{\sup}_{\|U\|_F =1}\left\|\left(\varphi(HW^Q) - \varphi(HW^{Q'})\right)\cdot D_\Phi\varphi(HW^K)^\top [U]\cdot H\theta \right\|_F \\
             &+ \mathop{\sup}_{\|U\|_F =1} \left\| D_\Phi\varphi(HW^{Q'})[U]\cdot \left(\varphi(HW^K)-\varphi(HW^{K'})\right)^\top H\theta\right\|_F \\
             &+ \mathop{\sup}_{\|U\|_F =1} \left\| \varphi(HW^{Q'})\cdot D_\Phi \left(\varphi(HW^K)-\varphi(HW^{K'})\right)^\top[U]\cdot H\theta\right\|_F \\
             \le & M_\varphi \left\|HW^Q-HW^{Q'}\right\|_F \left\|H\right\|_F\left\|\varphi(HW^K)\right\|_F\left\|H\right\|_F\left\|\theta\right\|_F \\
             &+ J_\varphi \left\|HW^Q-HW^{Q'}\right\|_F\cdot J_\varphi \left\|H\right\|_F^2\left\|\theta\right\|_F \\
             &+ J_\varphi \left\|H\right\|_F \cdot J_\varphi\left\|HW^K-HW^{K'}\right\|_F\left\|H\right\|_F\left\|\theta\right\|_F \\
             &+ \left\| \varphi(HW^{Q'})\right\|_F M_\varphi\left\|HW^K-HW^{K'}\right\|_F\left\|H\right\|_F\left\|\theta\right\|_F \\
             \le& \left(M_\varphi B_\varphi B_\theta R^3 + B_\theta J_\varphi^2R^3\right) \left(\left\|W^Q-W^{Q'}\right\|_F + \left\|W^K-W^{K'}\right\|_F\right) \\
             \le& \sqrt{2}\left(M_\varphi B_\varphi B_\theta R^3 + B_\theta J_\varphi^2R^3\right)\left\|\Phi-\Phi'\right\|_F \\
             \triangleq& C_4\left\|\Phi-\Phi'\right\|_F.
        \end{aligned}
    \end{equation}
\end{proof}
\begin{lem}\label{lm1}
    Assume Assumption~\ref{asmp:attention_assumption} holds. There exist constants $L_{\theta\theta}, L_{\theta\Phi}, L_{\Phi\theta}, L_{\Phi\Phi}>0$, such that for every client $k$ and any $(\Phi, \theta), (\Phi', \theta')$ satisfying $\norm{\theta}_F \le B_\theta$ and $\norm{\theta'}_F\le B_\theta$,
    \begin{equation}\label{lm1-1}
        \begin{aligned}
            &\left\| \nabla_\theta F_k(\Phi,\theta)-\nabla_\theta F_k(\Phi,\theta') \right\|_F \le L_{\theta\theta}\left\|\theta-\theta'\right\|_F , \\
            &\left\| \nabla_\theta F_k(\Phi,\theta)-\nabla_\theta F_k(\Phi',\theta) \right\|_F \le L_{\theta\Phi}\left\|\Phi-\Phi'\right\|_F , \\
            &\left\| \nabla_\Phi F_k(\Phi,\theta)-\nabla_\Phi F_k(\Phi,\theta') \right\|_F \le L_{\Phi\theta}\left\|\theta-\theta'\right\|_F , \\
            &\left\| \nabla_\Phi F_k(\Phi,\theta)-\nabla_\Phi F_k(\Phi',\theta) \right\|_F \le L_{\Phi\Phi}\left\|\Phi-\Phi'\right\|_F .
        \end{aligned}
    \end{equation}

\end{lem}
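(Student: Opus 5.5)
The plan is to write each block-gradient through the chain rule and bound the resulting products term by term, using Lemma~\ref{lm0} together with Assumption~\ref{asmp:attention_assumption}. Write $O = Z(H;\Phi)\theta$. For a single sample,
\[
\nabla_\theta \ell = Z(H;\Phi)^\top \nabla_O\ell(O,y), \qquad \nabla_\Phi \ell = D_\Phi O(H;\Phi,\theta)^{*}\bigl[\nabla_O \ell(O,y)\bigr],
\]
where $^{*}$ denotes the adjoint. Since every constant below is uniform in $(H,y)$, each per-sample bound passes through the expectation over $\mathcal{D}_k$. This gives the statement for every client $k$ with the same constants.

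\textbf{Step 1 ($\theta$-block).} For the first inequality, $Z$ is fixed, so
\[
\norm{\nabla_\theta \ell(\theta)-\nabla_\theta\ell(\theta')}_F \le C_1 \norm{\nabla_O\ell(O)-\nabla_O\ell(O')}_F .
\]
Assumption (\ref{asmp:5}) gives convexity plus the $L_O$-smoothness upper bound, and these together imply that $\nabla_O\ell$ is $L_O$-Lipschitz. Combining this with $\norm{O-O'}_F\le C_1\norm{\theta-\theta'}_F$ yields $L_{\theta\theta}=C_1^2L_O$.

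For the second inequality, split the difference as
\[
(Z-Z')^\top\nabla_O\ell(O) + Z'^\top\bigl(\nabla_O\ell(O)-\nabla_O\ell(O')\bigr).
\]
Bound the first piece by $C_2 G_O\norm{\Phi-\Phi'}_F$ using (1.2) and (\ref{asmp:2}). Bound the second by $C_1 L_O \norm{(Z-Z')\theta}_F \le C_1L_OC_2B_\theta\norm{\Phi-\Phi'}_F$. Hence $L_{\theta\Phi}=C_2G_O+C_1C_2L_OB_\theta$.

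\textbf{Step 2 ($\Phi$-block).} Here $D_\Phi O(H;\Phi,\theta)[U]=D_\Phi Z(H;\Phi)[U]\,\theta$ is linear in $\theta$. For the third inequality, split as
\[
D_\Phi O(\theta)^{*}\bigl[\nabla_O\ell(O)-\nabla_O\ell(O')\bigr] + \bigl(D_\Phi O(\theta)-D_\Phi O(\theta')\bigr)^{*}\bigl[\nabla_O\ell(O')\bigr].
\]
The first term is at most $C_3L_OC_1\norm{\theta-\theta'}_F$ by (1.3). For the second, linearity in $\theta$ gives operator norm at most $\sup_U\norm{D_\Phi Z[U]}_F\,\norm{\theta-\theta'}_F\le C_2\norm{\theta-\theta'}_F$. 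This is the same estimate used inside the proof of (1.3). So $L_{\Phi\theta}=C_1C_3L_O+C_2G_O$.

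For the fourth inequality, split as
\[
\bigl(D_\Phi O(\Phi)-D_\Phi O(\Phi')\bigr)^{*}[\nabla_O\ell(O)] + D_\Phi O(\Phi')^{*}\bigl[\nabla_O\ell(O)-\nabla_O\ell(O')\bigr].
\]
Use (1.4) on the first term and (1.3) with (1.2) on the second. This gives $L_{\Phi\Phi}=C_4G_O+C_3L_OC_2B_\theta$.

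\textbf{Main obstacle.} The only delicate points are the following.
\begin{itemize}
\item Adjoint and operator norms agree, i.e.\ $\norm{A^{*}}_{\mathrm{op}}=\norm{A}_{\mathrm{op}}$. This is what lets us transfer the bounds of Lemma~\ref{lm0} to the gradient maps.
\item The Lipschitz property of $\nabla_O\ell$ must be extracted from (\ref{asmp:5}). The upper quadratic bound alone does not suffice; convexity is what gives co-coercivity and hence the Lipschitz bound.
\item The restriction $\norm{\theta}_F,\norm{\theta'}_F\le B_\theta$ is needed exactly where the constants $C_3,C_4$ and the bound $\norm{(Z-Z')\theta}_F$ appear.
\end{itemize}
Everything else is routine triangle-inequality bookkeeping.
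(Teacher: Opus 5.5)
Your proposal is correct and follows the paper's proof essentially line by line. It uses the same chain-rule expressions and the same triangle-inequality splits, differing only in which endpoint the gradient difference is paired with in the second bound. It arrives at the same constants: $L_{\theta\theta}=C_1^2L_O$, $L_{\theta\Phi}=L_{\Phi\theta}=C_2(G_O+C_1L_OB_\theta)$, and $L_{\Phi\Phi}=C_4G_O+C_2C_3L_OB_\theta$. Your explicit justification that $\nabla_O\ell$ is $L_O$-Lipschitz, from convexity together with the quadratic upper bound in (\ref{asmp:5}), fills in a step the paper uses tacitly.
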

\begin{proof}
    By Lemma \ref{lm0}, for the $\theta$-$\theta$ Lipschitz bound,
    \begin{equation}
        \begin{aligned}
            &\left\| \nabla_\theta F_k(\Phi,\theta)-\nabla_\theta F_k(\Phi,\theta') \right\|_F \\
            \le &\mathbb{E}_{\mathcal{D}_k} \left[\left\| Z(H;\Phi)\right\|_F\cdot \left\|\nabla_O \ell \left(O\left(H;\Phi,\theta\right),y\right) - \nabla_O \ell \left(O\left(H;\Phi,\theta'\right),y\right)\right\|_F\right]\\
            \le & C_1\cdot L_OC_1\left\|\theta-\theta'\right\|_F \\
            =&C_1^2L_O\left\|\theta-\theta'\right\|_F.
        \end{aligned}
    \end{equation}
    For the $\theta$-$\Phi$ Lipschitz bound,
    \begin{equation}
        \begin{aligned}
            &\left\| \nabla_\theta F_k(\Phi,\theta)-\nabla_\theta F_k(\Phi',\theta) \right\|_F \\
            \le &\mathbb{E}_{\mathcal{D}_k} \left[\left\| Z(H;\Phi)\right\|_F\cdot \left\|\nabla_O \ell \left(O\left(H;\Phi,\theta\right),y\right) - \nabla_O \ell \left(O\left(H;\Phi',\theta\right),y\right)\right\|_F\right]\\
            &+\mathbb{E}_{\mathcal{D}_k} \left[\left\| Z(H;\Phi)- Z(H;\Phi')\right\|_F \cdot\left\|\nabla_O \ell \left(O\left(H;\Phi',\theta\right),y\right)\right\|_F\right] \\
            \le& C_1\cdot L_OC_2\left\|\Phi-\Phi'\right\|_FB_\theta + C_2\left\|\Phi-\Phi'\right\|_F\cdot G_O\\
            =& \left(C_1C_2L_O B_\theta +C_2G_O\right)\left\|\Phi-\Phi'\right\|_F.
        \end{aligned}
    \end{equation}
    For the $\Phi$-$\theta$ Lipschitz bound,
    \begin{equation}
        \begin{aligned}
            &\left\| \nabla_\Phi F_k(\Phi,\theta)-\nabla_\Phi F_k(\Phi,\theta') \right\|_F \\
            \le& \mathbb{E}_{\mathcal{D}_k} \left[\left\|D_\Phi O\left(H;\Phi,\theta\right) - D_\Phi O\left(H; \Phi,\theta'\right)\right\|_{\mathrm{op}}\cdot \left\|\nabla_O \ell \left(O\left(H;\Phi,\theta\right),y\right)\right\|_F\right] \\
            &+\mathbb{E}_{\mathcal{D}_k} \left[\left\|D_\Phi O\left(H; \Phi,\theta'\right)\right\|_{\mathrm{op}}\cdot\left\|\nabla_O \ell \left(O\left(H;\Phi,\theta\right),y\right) - \nabla_O \ell \left(O\left(H;\Phi,\theta'\right),y\right)\right\|_F\right] \\
            \le&C_2 \left\|\theta-\theta'\right\|_F\cdot G_O + C_2B_\theta\cdot L_OC_1 \left\|\theta-\theta'\right\|_F \\
            =& C_2\left(G_O+C_1L_OB_\theta\right)\left\|\theta-\theta'\right\|_F.
        \end{aligned}
    \end{equation}
    For the $\Phi$-$\Phi$ Lipschitz bound,
    \begin{equation}
        \begin{aligned}
            &\left\| \nabla_\Phi F_k(\Phi,\theta)-\nabla_\Phi F_k(\Phi',\theta) \right\|_F \\
            \le&\mathbb{E}_{\mathcal{D}_k} \left[\left\|D_\Phi O\left(H;\Phi,\theta\right) - D_\Phi O\left(H; \Phi',\theta\right)\right\|_{\mathrm{op}}\cdot \left\|\nabla_O \ell \left(O\left(H;\Phi,\theta\right),y\right)\right\|_F\right] \\
            &+\mathbb{E}_{\mathcal{D}_k} \left[\left\|D_\Phi O\left(H; \Phi',\theta\right)\right\|_{\mathrm{op}}\cdot\left\|\nabla_O \ell \left(O\left(H;\Phi,\theta\right),y\right) - \nabla_O \ell \left(O\left(H;\Phi',\theta\right),y\right)\right\|_F\right] \\
            \le& C_4\left\|\Phi-\Phi'\right\|_F\cdot G_O + C_3\cdot L_OC_2\left\|\Phi-\Phi'\right\|_FB_\theta \\
            =&\left(C_4G_O+C_2C_3L_OB_\theta\right)\left\|\Phi-\Phi'\right\|_F.
        \end{aligned}
    \end{equation}
    Let,
    \begin{equation}
        \begin{aligned}
            &L_{\theta\theta} = C_1^2L_O\\
            &L_{\theta\Phi}= C_2\left(G_O+C_1L_OB_\theta\right)\\
            &L_{\Phi\theta} = C_2\left(G_O+C_1L_OB_\theta\right)\\
            &L_{\Phi\Phi} = C_4G_O+C_2C_3L_OB_\theta
        \end{aligned}
    \end{equation}
    then Lemma \ref{lm1} holds.
\end{proof}

\begin{lem}\label{lm:theta_star_bound}
    Assume Assumption~\ref{asmp:attention_assumption} holds, when $\lambda \ge C_1G_O/B_\theta$, for any kernel $\Phi$,
    \begin{equation}
        \norm{\theta_\lambda^\star(\Phi)}_F \le B_\theta
    \end{equation}
\end{lem}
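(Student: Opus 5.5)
We plan to use the first-order optimality condition of the regularized problem in $\theta$ together with the gradient bound from Assumption~\ref{asmp:attention_assumption}. Fix a kernel $\Phi$ and write $\theta^\star := \theta_\lambda^\star(\Phi)$. Since $\theta^\star$ minimizes $f_\lambda(\Phi,\cdot)$, which is differentiable in $\theta$ (by \ref{asmp:5} and the linearity of $O$ in $\theta$), we have
\begin{equation*}
    0 = \nabla_\theta f(\Phi,\theta^\star) + \lambda \theta^\star ,
    \qquad\text{hence}\qquad
    \norm{\theta^\star}_F = \frac{1}{\lambda}\norm{\nabla_\theta f(\Phi,\theta^\star)}_F .
\end{equation*}
The task therefore reduces to bounding $\norm{\nabla_\theta f(\Phi,\theta)}_F$ uniformly over all $\theta$, not only over the ball $\norm{\theta}_F\le B_\theta$. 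This matters because we do not yet know that $\theta^\star$ lies in that ball.

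Next we compute the $\theta$-gradient explicitly. Since $O(H;\Phi,\theta)=Z(H;\Phi)\theta$ with $Z$ as in Lemma~\ref{lm0}, the chain rule gives
\begin{equation*}
    \nabla_\theta F_k(\Phi,\theta)
    = \E_{\mathcal{D}_k}\bigl[ Z(H;\Phi)^\top \nabla_O \ell(O(H;\Phi,\theta),y) \bigr].
\end{equation*}
Jensen's inequality and submultiplicativity then yield $\norm{\nabla_\theta F_k}_F \le \E\bigl[\norm{Z}_F\,\norm{\nabla_O\ell}_F\bigr] \le C_1 G_O$. Here we use (1.1) of Lemma~\ref{lm0}, which depends only on \ref{asmp:1} and \ref{asmp:4} and not on $\theta$, together with \ref{asmp:2}, which holds for every $O$. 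Averaging with weights $p_k$ gives $\norm{\nabla_\theta f(\Phi,\theta)}_F\le C_1G_O$ for all $\theta$.

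Combining the two steps gives $\norm{\theta^\star}_F \le C_1G_O/\lambda \le B_\theta$ whenever $\lambda\ge C_1G_O/B_\theta$. The only delicate point is the one already flagged: we must confirm that no step invokes \ref{asmp:3} or the restriction $\norm{\theta}_F\le B_\theta$ from Lemma~\ref{lm1}, since doing so would make the argument circular. Because the bound uses only (1.1) and the uniform bound \ref{asmp:2}, it is valid at $\theta^\star$ regardless of its norm. If one prefers to avoid differentiating under the expectation, the alternative is a variational argument. Compare $f_\lambda(\Phi,\theta^\star)\le f_\lambda(\Phi,t\theta^\star)$ for $t<1$ and use the bound $\abs{f(\Phi,\theta)-f(\Phi,\theta')}\le C_1G_O\norm{\theta-\theta'}_F$, which follows from the mean value theorem along the segment. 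Letting $t\to1$ then gives the same bound.
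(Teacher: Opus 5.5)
Your proposal is correct and follows the paper's proof. Both arguments combine the first-order condition $\nabla_\theta f(\Phi,\theta_\lambda^\star(\Phi)) + \lambda\theta_\lambda^\star(\Phi) = 0$ with the bound $\norm{\nabla_\theta f(\Phi,\theta)}_F \le C_1G_O$ to get $\norm{\theta_\lambda^\star(\Phi)}_F \le C_1G_O/\lambda \le B_\theta$, and your remark that this gradient bound uses only $\norm{Z(H;\Phi)}_F\le C_1$ and the uniform bound on $\nabla_O\ell$ (so it holds for every $\theta$, not just on the ball $\norm{\theta}_F\le B_\theta$) usefully makes explicit the non-circularity the paper leaves implicit.
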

\begin{proof}
    Since $\theta_\lambda^\star(\Phi)$ is the minimizer, according to the first order condition, we have:
    \begin{equation}
        0 = \nabla_\theta f_\lambda(\Phi,\theta_\lambda^\star(\Phi)) = \nabla_\theta f(\Phi,\theta_\lambda^\star(\Phi)) + \lambda \theta_\lambda^\star(\Phi)
    \end{equation}
    then,
    \begin{equation}
        \norm{\theta_\lambda^\star(\Phi)}_F = \frac1 \lambda  \norm{\nabla_\theta f(\Phi,\theta_\lambda^\star(\Phi))}_F \le \frac{C_1G_O}{\lambda} \le B_\theta.
    \end{equation}
\end{proof}

\begin{lem}\label{lm2}
    Assume  Assumption~\ref{asmp:attention_assumption} holds and $\lambda \ge C_1G_O/B_\theta$. Then $\theta_\lambda^\star$ is unique and Lipschitz with respect to $\Phi$, i.e., for any kernels $\Phi,\Phi'$:
    \begin{equation}\label{lm2-1}
        \left\| \theta_\lambda^\star(\Phi) - \theta_\lambda^\star(\Phi')\right\|_F \le \frac{L_{\theta\Phi}}{\lambda} \left\|\Phi-\Phi'\right\|_F
    \end{equation}
    furthermore, $h_\lambda$ is $L_h^\lambda$-smooth, i.e., for any kernels $\Phi, \Phi'$:
    \begin{equation}\label{lm2-2}
        \left\| \nabla h_\lambda(\Phi) - \nabla h_\lambda(\Phi')\right\|_F \le L_h^\lambda \left\| \Phi - \Phi' \right\|_F
    \end{equation}
    where $L_h^\lambda = L_{\Phi\Phi} + \lambda^{-1}L_{\Phi\theta}L_{\theta\Phi}$.
\end{lem}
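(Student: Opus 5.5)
The proof rests on the strong convexity of $f_\lambda(\Phi,\cdot)$ in $\theta$. By (\ref{asmp:5}), $\ell(\cdot,y)$ is convex in $O$. For fixed $\Phi$, the output $O(H;\Phi,\theta)=Z(H;\Phi)\theta$ is linear in $\theta$, so each $F_k(\Phi,\cdot)$ is convex. Adding $\frac{\lambda}{2}\|\theta\|_F^2$ therefore makes $f_\lambda(\Phi,\cdot)$ $\lambda$-strongly convex, which gives uniqueness of $\theta_\lambda^\star(\Phi)$. Lemma~\ref{lm:theta_star_bound} places this minimizer inside the ball $\|\theta\|_F\le B_\theta$, where the Lipschitz bounds of Lemma~\ref{lm1} are valid.

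\textbf{Lipschitz continuity of $\theta_\lambda^\star$.} Write $\theta=\theta_\lambda^\star(\Phi)$ and $\theta'=\theta_\lambda^\star(\Phi')$. The first-order conditions are $\nabla_\theta f_\lambda(\Phi,\theta)=0=\nabla_\theta f_\lambda(\Phi',\theta')$. Strong monotonicity of $\nabla_\theta f_\lambda(\Phi,\cdot)$ gives
\[
\lambda\|\theta-\theta'\|_F^2\le\inner{\nabla_\theta f_\lambda(\Phi,\theta)-\nabla_\theta f_\lambda(\Phi,\theta')}{\theta-\theta'}
=\inner{\nabla_\theta f(\Phi',\theta')-\nabla_\theta f(\Phi,\theta')}{\theta-\theta'}.
\]
The equality holds because the $\lambda\theta'$ terms cancel after substituting the optimality condition at $\Phi'$. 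Cauchy--Schwarz together with the $\theta$-$\Phi$ bound of Lemma~\ref{lm1} (averaged over $p_k$, with $\|\theta'\|_F\le B_\theta$) then yields (\ref{lm2-1}).

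\textbf{Smoothness of $h_\lambda$.} By Danskin's theorem, or the envelope theorem since the minimizer is unique and strong convexity holds, we have $\nabla h_\lambda(\Phi)=\nabla_\Phi f(\Phi,\theta_\lambda^\star(\Phi))$; the regularizer does not depend on $\Phi$. One can justify differentiability either through the implicit function theorem or directly. For the direct route, bound $h_\lambda(\Phi')-h_\lambda(\Phi)$ above by evaluating at $\theta_\lambda^\star(\Phi)$ and below by evaluating at $\theta_\lambda^\star(\Phi')$. Continuity of $\theta_\lambda^\star$, which follows from (\ref{lm2-1}), then shows that both bounds agree to first order.

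Next split
\[
\nabla h_\lambda(\Phi)-\nabla h_\lambda(\Phi')=\bigl[\nabla_\Phi f(\Phi,\theta)-\nabla_\Phi f(\Phi',\theta)\bigr]+\bigl[\nabla_\Phi f(\Phi',\theta)-\nabla_\Phi f(\Phi',\theta')\bigr].
\]
The first bracket is at most $L_{\Phi\Phi}\|\Phi-\Phi'\|_F$. The second is at most $L_{\Phi\theta}\|\theta-\theta'\|_F\le L_{\Phi\theta}L_{\theta\Phi}\lambda^{-1}\|\Phi-\Phi'\|_F$ by (\ref{lm2-1}). Adding the two gives $L_h^\lambda$.

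The main obstacle is making the envelope/Danskin step rigorous. Lemma~\ref{lm1} only holds on the ball $\|\theta\|_F\le B_\theta$, so I would check that every point used stays there; this is guaranteed by Lemma~\ref{lm:theta_star_bound}. I would also need existence of the minimizer. This follows from coercivity of $f_\lambda(\Phi,\cdot)$, since $f\ge$ an affine minorant by convexity plus the quadratic regularizer.
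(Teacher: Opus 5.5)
Your proposal is correct and follows essentially the same route as the paper. Uniqueness comes from $\lambda$-strong convexity, (\ref{lm2-1}) comes from the first-order conditions together with strong monotonicity and the $\theta$-$\Phi$ bound of Lemma~\ref{lm1}, and (\ref{lm2-2}) comes from Danskin's formula followed by a two-term triangle split (you split in the opposite order, which is immaterial). Your extra checks---existence via coercivity, keeping every evaluation point in the $B_\theta$-ball via Lemma~\ref{lm:theta_star_bound}, and the first-order sandwich that justifies the envelope formula without compactness---fill in details the paper takes for granted, but they do not change the argument.
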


\begin{proof}
    Note that for any kernel $\Phi$, $f_\lambda(\Phi, \theta)$ is $\lambda$-strongly convex with respect to $\theta$, so that $\theta_\lambda^\star$ is unique.

    For any kernels $\Phi,\Phi'$, by the definition of $\theta_\lambda^\star$:
    \begin{equation}
        \nabla_\theta f_\lambda\left(\Phi, \theta_\lambda^\star(\Phi)\right) = 0 = \nabla_\theta f_\lambda\left(\Phi', \theta_\lambda^\star(\Phi')\right)
    \end{equation}
    then by the $\lambda$-strongly convex property and Lemmas~\ref{lm1} and~\ref{lm:theta_star_bound}, we have:
    \begin{equation}
        \begin{aligned}
            \left\|\theta_\lambda^\star(\Phi)-\theta_\lambda^\star(\Phi')\right\|_F &\le \frac1\lambda \left\| \nabla_\theta f_\lambda\left(\Phi,\theta_\lambda^\star(\Phi)\right) - \nabla_\theta f_\lambda\left(\Phi,\theta_\lambda^\star(\Phi')\right)\right\|_F \\
            &= \frac1\lambda \left\| \nabla_\theta f_\lambda\left(\Phi',\theta_\lambda^\star(\Phi')\right) - \nabla_\theta f_\lambda\left(\Phi,\theta_\lambda^\star(\Phi')\right)\right\|_F \\
            &\le \frac{L_{\theta\Phi}}{\lambda} \left\|\Phi-\Phi'\right\|_F.
        \end{aligned}
    \end{equation}
    For the smoothness of $h_\lambda$, by Danskin's theorem,
    \begin{equation}
        \nabla h_\lambda(\Phi) = \nabla_\Phi f_\lambda\left(\Phi, \theta_\lambda^\star(\Phi)\right)
    \end{equation}
    then for any kernels $\Phi,\Phi'$:
    \begin{equation}
        \begin{aligned}
            \left\|\nabla h_\lambda(\Phi) - \nabla h_\lambda(\Phi')\right\|_F &= \left\|\nabla_\Phi f_\lambda\left(\Phi, \theta_\lambda^\star(\Phi)\right) - \nabla_\Phi f_\lambda\left(\Phi', \theta_\lambda^\star(\Phi')\right)\right\|_F \\
            &\le \left\|\nabla_\Phi f_\lambda\left(\Phi, \theta_\lambda^\star(\Phi)\right) - \nabla_\Phi f_\lambda\left(\Phi, \theta_\lambda^\star(\Phi')\right)\right\|_F \\
            &\quad + \left\|\nabla_\Phi f_\lambda\left(\Phi, \theta_\lambda^\star(\Phi')\right) - \nabla_\Phi f_\lambda\left(\Phi', \theta_\lambda^\star(\Phi')\right)\right\|_F \\
            &\le L_{\Phi\theta} \left\|\theta_\lambda^\star(\Phi)-\theta_\lambda^\star(\Phi')\right\|_F + L_{\Phi\Phi} \left\| \Phi - \Phi' \right\|_F \\
            &\le \left(\frac{L_{\Phi\theta}L_{\theta\Phi}}{\lambda}+ L_{\Phi\Phi}\right)\left\| \Phi - \Phi' \right\|_F \\
            &= L_h^\lambda \left\| \Phi - \Phi' \right\|_F
        \end{aligned}
    \end{equation}
    therefore (\ref{lm2-2}) holds.
\end{proof}

\begin{lem}\label{lm3}
    Assume Assumption \ref{asmp:attention_assumption} and \ref{asmp:global_dissim} hold. Let,
    \begin{equation}\label{lm3-1}
            b_\Phi^t = \frac1E \sum\limits_{k=1}^K p_k\sum\limits_{j=0}^{E-1}\left(\nabla_\Phi F_k^\lambda\left(\Phi_{k,j}^t,\theta_{k,j}^t\right) - \nabla_{\Phi}F_k^\lambda\left(\Phi^t,\theta^t\right)\right)
    \end{equation}
    let $L_f^\lambda  =\sqrt{2}\cdot \max\left\{\sqrt{L_{\Phi\Phi}^2 + L_{\theta\Phi}^2}, \sqrt{L_{\Phi\theta}^2+(L_{\theta\theta}+\lambda)^2}\right\}$, for any $\epsilon>0$, when $\eta L_f^\lambda E \le \epsilon /(1+\epsilon)$, we have:
    \begin{equation}
        \left\|b_\Phi^t\right\|_F \le \epsilon\sqrt{M_\lambda^2 + B^2 \left\|\nabla f_\lambda (\Phi^t,\theta^t)\right\|_F^2}
    \end{equation}
    where $M_\lambda^2 = M^2+2(B^2-1)\lambda\cdot C_1G_OB_\theta.$
\end{lem}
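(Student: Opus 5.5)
We prove Lemma~\ref{lm3} by controlling how far each local iterate drifts from the round-$t$ server point. We then convert that drift into a bound on $b_\Phi^t$ using joint smoothness of $F_k^\lambda$. Throughout, write $w=(\Phi,\theta)$, $w^t=(\Phi^t,\theta^t)$, $w_{k,j}^t=(\Phi_{k,j}^t,\theta_{k,j}^t)$ and $\delta_{k,j}:=\|w_{k,j}^t-w^t\|_F$.

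\textbf{Step 1: joint smoothness.} By Lemma~\ref{lm1}, together with the extra $\lambda\theta$ in $\nabla_\theta F_k^\lambda$, the full gradient $\nabla F_k^\lambda$ is $L_f^\lambda$-Lipschitz in $w$ on $\{\|\theta\|_F\le B_\theta\}$. Assumption~\ref{asmp:3} places all iterates in this region. The proof of this step is a row/column split followed by $(a+b)^2\le 2a^2+2b^2$. The $\Phi$-component of the gradient changes by at most $L_{\Phi\Phi}\|\Delta\Phi\|+L_{\Phi\theta}\|\Delta\theta\|$, and the $\theta$-component by at most $L_{\theta\Phi}\|\Delta\Phi\|+(L_{\theta\theta}+\lambda)\|\Delta\theta\|$. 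Regrouping by columns gives exactly the constant $L_f^\lambda$ defined in the lemma. In particular,
\[
\|b_\Phi^t\|_F\le \frac1E\sum_k p_k\sum_j L_f^\lambda\,\delta_{k,j}.
\]

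\textbf{Step 2: drift recursion.} Since $w_{k,j+1}^t=w_{k,j}^t-\eta\nabla F_k^\lambda(w_{k,j}^t)$, we have
\[
\delta_{k,j+1}\le \delta_{k,j}+\eta\|\nabla F_k^\lambda(w^t)\|_F+\eta L_f^\lambda\delta_{k,j}.
\]
Unrolling for $j\le E$ gives $\delta_{k,j}\le \eta j(1+\eta L_f^\lambda)^{j}\|\nabla F_k^\lambda(w^t)\|_F$. A cleaner route, which avoids the exponential factor, is to set $D_k:=\max_{j\le E}\delta_{k,j}$. Then $D_k\le \eta E\|\nabla F_k^\lambda(w^t)\|+\eta E L_f^\lambda D_k$, so
\[
D_k\le \frac{\eta E}{1-\eta EL_f^\lambda}\|\nabla F_k^\lambda(w^t)\|_F.
\]
Plugging into Step 1 gives $\|b_\Phi^t\|_F\le \frac{\eta EL_f^\lambda}{1-\eta EL_f^\lambda}\sum_kp_k\|\nabla F_k^\lambda(w^t)\|_F$. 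Under $\eta L_f^\lambda E\le \epsilon/(1+\epsilon)$, the prefactor $x/(1-x)$ is at most $\epsilon$. This is exactly why the step-size condition takes that form.

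\textbf{Step 3: heterogeneity.} By Jensen, $\sum_kp_k\|\nabla F_k^\lambda\|\le(\sum_kp_k\|\nabla F_k^\lambda\|^2)^{1/2}$, so it remains to show
\[
\sum_kp_k\|\nabla F_k^\lambda(w^t)\|_F^2\le M_\lambda^2+B^2\|\nabla f_\lambda(w^t)\|_F^2.
\]
This is the step I expect to be the main obstacle, because Assumption~\ref{asmp:global_dissim} is stated for the unregularized $F_k$. Write $\nabla F_k^\lambda=\nabla F_k+\lambda(0,\theta)$ and expand the squares. The $\lambda^2\|\theta\|^2$ terms appear with coefficient $1$ on the left and $B^2$ on the right. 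The cross term is $2\lambda\langle\nabla_\theta f,\theta\rangle$ on the left and $2B^2\lambda\langle\nabla_\theta f,\theta\rangle$ on the right, using $\sum_kp_k\nabla F_k=\nabla f$. After applying Assumption~\ref{asmp:global_dissim}, the leftover is $(1-B^2)(\lambda^2\|\theta\|^2+2\lambda\langle\nabla_\theta f,\theta\rangle)$. The $\lambda^2$ part is nonpositive since $B\ge1$. The cross part is at most $2(B^2-1)\lambda\,|\langle\nabla_\theta f,\theta\rangle|\le 2(B^2-1)\lambda C_1G_OB_\theta$, using $\|\nabla_\theta f\|\le C_1G_O$ from Lemma~\ref{lm0}(1.1) and Assumption~\ref{asmp:2}, together with Assumption~\ref{asmp:3}. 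This yields exactly $M_\lambda^2$. Combining Steps 1--3 proves the claim.
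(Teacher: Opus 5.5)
Your proof is correct and follows the paper's argument. It uses the same joint-smoothness constant $L_f^\lambda$ from Lemma~\ref{lm1}, the same one-step drift recursion $\delta_{k,j+1}\le(1+\eta L_f^\lambda)\delta_{k,j}+\eta\|\nabla F_k^\lambda(w^t)\|_F$, and the same Jensen-plus-expansion treatment of Assumption~\ref{asmp:global_dissim} leaving the $(1-B^2)$ remainder. The only deviation is in solving the recursion: your self-bounding maximum $D_k$ gives the prefactor $x/(1-x)$ with $x=\eta E L_f^\lambda$, whereas the paper unrolls into a geometric sum and invokes $(1+\eta L_f^\lambda)^E-1\le(1+\epsilon)\eta L_f^\lambda E$ without justification; both yield exactly $\epsilon$ under the same step-size condition.
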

\begin{proof}
    Let,
    \begin{equation}
        c_{k,j}^t = (\Phi_{k,j}^t,\theta_{k,j}^t) - (\Phi^t,\theta^t)
    \end{equation}
    where $0\le t \le T_\mathrm{warm}-1, 1\le k\le K, 0\le j \le E$. Then $c_{k,0}^t = 0$ and for $0 \le j \le E-1$,
    \begin{equation}
        \begin{aligned}
            c_{k,j+1}^t &= c_{k,j}^t - \eta \nabla F_{k}^\lambda (\Phi_{k,j}^t,\theta_{k,j}^t) \\
            & = c_{k,j}^t - \eta \left(\nabla F_{k}^\lambda (\Phi_{k,j}^t,\theta_{k,j}^t)-\nabla F_{k}^\lambda (\Phi^t,\theta^t)\right) -  \eta \nabla F_{k}^\lambda (\Phi^t,\theta^t)
        \end{aligned}
    \end{equation}
    By Lemma \ref{lm1}, for any $(\Phi,\theta), (\Phi',\theta')$ with $\norm{\theta}_F, \norm{\theta'}_F \le B_\theta $ and $1\le k\le K$,
    \begin{equation}
        \begin{aligned}
            \left\|\nabla F_{k}^\lambda (\Phi,\theta)-\nabla F_{k}^\lambda (\Phi',\theta')\right\|_F &\le \left\|\nabla F_{k}^\lambda (\Phi,\theta)-\nabla F_{k}^\lambda (\Phi',\theta)\right\|_F + \left\|\nabla F_{k}^\lambda (\Phi',\theta)-\nabla F_{k}^\lambda (\Phi',\theta')\right\|_F \\
            &\le \sqrt{L_{\Phi\Phi}^2 + L_{\theta\Phi}^2}\left\|\Phi-\Phi'\right\|_F + \sqrt{L_{\Phi\theta}^2+(L_{\theta\theta}+\lambda)^2}\left\|\theta-\theta'\right\|_F \\
            &\le L_f^\lambda  \left\|(\Phi,\theta)-(\Phi',\theta')\right\|_F
        \end{aligned}
    \end{equation}
    then $F_{k}^\lambda$ is $L_f^\lambda$-smooth, so that for $0 \le j \le E-1$,
    \begin{equation}
        \begin{aligned}
            \left\|c_{k,j+1}^t\right\|_F &=\left\|c_{k,j}^t - \eta \left(\nabla F_{k}^\lambda (\Phi_{k,j}^t,\theta_{k,j}^t)-\nabla F_{k}^\lambda (\Phi^t,\theta^t)\right) -  \eta \nabla F_{k}^\lambda (\Phi^t,\theta^t)\right\|_F \\
            &\le (1+\eta L_f^\lambda)\left\|c_{k,j}^t\right\|_F + \eta \left\|\nabla F_{k}^\lambda (\Phi^t,\theta^t)\right\|_F
        \end{aligned}
    \end{equation}
    then for $1\le j \le E$:
    \begin{equation}
        \left\|c_{k,j}^t\right\|_F \le \frac{(1+\eta L_f^\lambda)^j-1}{L_f^\lambda} \left\|\nabla F_{k}^\lambda (\Phi^t,\theta^t)\right\|_F
    \end{equation}
    When $\eta L_f^\lambda E \le \epsilon /(1+\epsilon)$, we have $(1+ \eta L_f^\lambda)^E -1\le (1+\epsilon)\eta L_f^\lambda E$, then,
    \begin{equation}
        \begin{aligned}
             \left\|b_\Phi^t\right\|_F &\le \frac1E \sum\limits_{k=1}^K p_k\sum\limits_{j=0}^{E-1}\left\|\nabla_\Phi F_k^\lambda\left(\Phi_{k,j}^t,\theta_{k,j}^t\right) - \nabla_{\Phi}F_k^\lambda\left(\Phi^t,\theta^t\right)\right\|_F \\
             &\le \frac{L_f^\lambda}{E}\sum\limits_{k=1}^K p_k\sum\limits_{j=0}^{E-1}\left\|c_{k,j}^t\right\|_F \\
             &\le \frac{L_f^\lambda}{E}\sum\limits_{k=1}^K p_k\sum\limits_{j=0}^{E-1} \frac{(1+\eta L_f^\lambda)^j-1}{L_f^\lambda} \left\|\nabla F_{k}^\lambda (\Phi^t,\theta^t)\right\|_F \\
             &= \frac1E \sum\limits_{k=1}^K p_k \left( \frac{(1+ \eta L_f^\lambda)^E - 1}{\eta L_f^\lambda} - E\right)\left\|\nabla F_{k}^\lambda (\Phi^t,\theta^t)\right\|_F \\
             &\le \epsilon\sum\limits_{k=1}^K p_k\left\|\nabla F_{k}^\lambda (\Phi^t,\theta^t)\right\|_F
        \end{aligned}
    \end{equation}
    by Assumption \ref{asmp:attention_assumption} and \ref{asmp:global_dissim}, we have:
    \begin{equation}
        \begin{aligned}
            \left(\sum\limits_{k=1}^K p_k\left\|\nabla F_{k}^\lambda (\Phi^t,\theta^t)\right\|_F\right)^2 &\le \sum\limits_{k=1}^K p_k\left\|\nabla F_{k}^\lambda (\Phi^t,\theta^t)\right\|_F^2 \\
            &= \sum\limits_{k=1}^K p_k\left(\left\|\nabla F_{k} (\Phi^t,\theta^t)\right\|_F^2 + 2\lambda \langle \nabla_\theta F_{k} (\Phi^t,\theta^t),  \theta^t \rangle + \lambda^2 \left\|\theta^t\right\|_F^2\right) \\
            &= \sum\limits_{k=1}^K p_k\left\|\nabla F_{k} (\Phi^t,\theta^t)\right\|_F^2 + 2\lambda \langle \nabla_\theta f (\Phi^t,\theta^t),  \theta^t \rangle + \lambda^2 \left\|\theta^t\right\|_F^2 \\
            &\le M^2 + B^2 \left\|\nabla f (\Phi^t,\theta^t)\right\|_F^2 + 2\lambda \langle \nabla_\theta f (\Phi^t,\theta^t),  \theta^t \rangle + \lambda^2 \left\|\theta^t\right\|_F^2 \\
            &= M^2 + B^2 \left\|\nabla f_\lambda (\Phi^t,\theta^t)\right\|_F^2 + (1-B^2)\cdot\left(2\lambda \langle \nabla_\theta f (\Phi^t,\theta^t),  \theta^t \rangle + \lambda^2 \left\|\theta^t\right\|_F^2\right) \\
            &\le M^2 + B^2 \left\|\nabla f_\lambda (\Phi^t,\theta^t)\right\|_F^2+2(B^2-1)\lambda \left\|\nabla_\theta f(\Phi^t,\theta^t)\right\|_F\left\|\theta^t\right\|_F \\
            &\le M^2+2(B^2-1)\lambda\cdot C_1G_OB_\theta +  B^2 \left\|\nabla f_\lambda (\Phi^t,\theta^t)\right\|_F^2\\
            &= M_\lambda ^2 + B^2 \left\|\nabla f_\lambda (\Phi^t,\theta^t)\right\|_F^2
        \end{aligned}
    \end{equation}
    therefore,
    \begin{equation}
        \begin{aligned}
            \left\|b_\Phi^t\right\|_F \le \epsilon\sum\limits_{k=1}^K p_k\left\|\nabla F_{k}^\lambda (\Phi^t,\theta^t)\right\|_F \le \epsilon\sqrt{M_\lambda^2 + B^2 \left\|\nabla f_\lambda (\Phi^t,\theta^t)\right\|_F^2}
        \end{aligned}
    \end{equation}
\end{proof}
Now we are ready to prove Theorem~\ref{thm1}.
\begin{proof}
    For $0 \le t \le T_\mathrm{warm} - 1$,
    \begin{equation}
        \begin{aligned}
            \Phi^{t+1} & = \Phi^t - \eta \sum\limits_{k=1}^Kp_k\sum\limits_{j=0}^{E-1}\nabla_\Phi F_k^\lambda(\Phi_{k,j}^t,\theta_{k,j}^t) \\
            &=\Phi^t - \eta E \left(\nabla_\Phi f_{\lambda}(\Phi^t,\theta^t) + b_\Phi^t\right) \\
        \end{aligned}
    \end{equation}
    Let $r_\lambda^t = \nabla_\Phi f_{\lambda}(\Phi^t,\theta^t) - \nabla h_\lambda(\Phi^t)$, note that $h_\lambda$ is $L_h^\lambda$-smooth, when $\eta \le 1/(L_h^\lambda E)$, we have:
    \begin{equation}
        \begin{aligned}
            h_\lambda(\Phi^{t+1}) &\le h_\lambda(\Phi^{t}) + \langle \nabla h_\lambda(\Phi^{t}), \Phi^{t+1}-\Phi^t\rangle + \frac {L_h^\lambda} 2 \left\|\Phi^{t+1}-\Phi^t\right\|_F^2 \\
            &= h_\lambda(\Phi^{t}) - \eta E \langle \nabla h_\lambda(\Phi^{t}), \nabla_\Phi f_{\lambda}(\Phi^t,\theta^t) + b_\Phi^t\rangle + \frac{\eta^2 E^2L_h^\lambda}{2}\left\|\nabla_\Phi f_{\lambda}(\Phi^t,\theta^t) + b_\Phi^t\right\|_F^2 \\
            &= h_\lambda(\Phi^{t}) - \eta E \langle \nabla h_\lambda(\Phi^{t}), \nabla h_\lambda(\Phi^{t}) + r_\lambda^t + b_\Phi^t\rangle + \frac{\eta^2 E^2L_h^\lambda}{2}\left\|\nabla h_\lambda(\Phi^{t}) + r_\lambda^t + b_\Phi^t\right\|_F^2 \\
            &\le h_\lambda(\Phi^{t}) -\frac{\eta E}{2} \left\|\nabla h_\lambda(\Phi^t)\right\|_F^2 + \frac{\eta E}{2}\left\|r_\lambda^t + b_\Phi^t\right\|_F^2
        \end{aligned}
    \end{equation}
    By Danskin's theorem,
    \begin{equation}
        \begin{aligned}
            \left\|r_\lambda^t\right\|_F  = \left\|\nabla_\Phi f_{\lambda}(\Phi^t,\theta^t) - \nabla_\Phi f_\lambda(\Phi^t, \theta_\lambda^\star(\Phi^t))\right\|_F \le L_{\Phi\theta} \left\|\theta^t - \theta_\lambda^\star(\Phi^t)\right\|_F = L_{\Phi\theta} \left\|e_t\right\|_F
        \end{aligned}
    \end{equation}
    in addition, by Lemma \ref{lm3}, when $\eta L_f^\lambda E \le \epsilon /(1+\epsilon)$ for $\epsilon = 1/(4B)$, we have:
    \begin{equation}
        \begin{aligned}
            \left\|b_\Phi^t\right\|_F^2 & \le \epsilon^2\left(M_\lambda^2 + B^2 \left\|\nabla f_\lambda (\Phi^t,\theta^t)\right\|_F^2\right) \\
            &\le \frac{M_\lambda^2}{16B^2} + \frac{1}{8}\left\|\nabla f_\lambda (\Phi^t,\theta^t) - \nabla f_\lambda (\Phi^t,\theta_\lambda^\star(\Phi^t))\right\|_F^2 + \frac{1}{8}\left\|\nabla f_\lambda (\Phi^t,\theta_\lambda^\star(\Phi^t))\right\|_F^2 \\
            &\le \frac{M_\lambda^2}{16B^2} + \frac{1}{8}\cdot\left(L_{\Phi\theta}^2+\left(L_{\theta\theta}+\lambda\right)^2\right)\left\|\theta^t - \theta_\lambda^\star(\Phi^t)\right\|_F^2 + \frac18\left\|\nabla h_\lambda(\Phi^t)\right\|_F^2 \\
            &= \frac{M_\lambda^2}{16B^2} + \frac18\left(L_{\Phi\theta}^2+\left(L_{\theta\theta}+\lambda\right)^2\right)\left\|e_t\right\|_F^2 + \frac18\left\|\nabla h_\lambda(\Phi^t)\right\|_F^2
        \end{aligned}
    \end{equation}
    then,
    \begin{equation}
        \begin{aligned}
            h_\lambda(\Phi^{t+1}) &\le h_\lambda(\Phi^{t})-\frac{\eta E}{2} \left\|\nabla h_\lambda(\Phi^t)\right\|_F^2 + \frac{\eta E}{2}\left\|r_\lambda^t + b_\Phi^t\right\|_F^2 \\
            &\le h_\lambda(\Phi^{t})-\frac{\eta E}{2} \left\|\nabla h_\lambda(\Phi^t)\right\|_F^2 +\eta E\left(\left\|r_\lambda^t\right\|_F^2 + \left\|b_\Phi^t\right\|_F^2\right) \\
            &\le h_\lambda(\Phi^{t})-\frac{\eta E}{2} \left\|\nabla h_\lambda(\Phi^t)\right\|_F^2 +\eta E L_{\Phi\theta}^2 \left\|e_t\right\|_F^2 \\
            & \quad + \eta E\left(\frac{M_\lambda^2}{16B^2} + \frac18\left(L_{\Phi\theta}^2+\left(L_{\theta\theta}+\lambda\right)^2\right)\left\|e_t\right\|_F^2 + \frac18\left\|\nabla h_\lambda(\Phi^t)\right\|_F^2\right) \\
            &\le h_\lambda(\Phi^{t})-\frac{3\eta E}{8} \left\|\nabla h_\lambda(\Phi^t)\right\|_F^2 + \eta E(L_f^\lambda)^2\left\|e_t\right\|_F^2 + \frac{M_\lambda^2\eta E}{16B^2}
        \end{aligned}
    \end{equation}
    Based on the above, let $C =  2B_\theta C_1G_O$ and $C_e = (L_f^\lambda)^2$. We also note that $M_\lambda^2 = M^2 +C(B^2-1)\lambda$. When $\eta \le \min \left\{\lambda/((L_{\Phi\Phi}\lambda + L_{\Phi\theta}L_{\theta\Phi})E), 1/(\sqrt{C_e}(4B+1)E)\right\}$, the first part of Theorem~\ref{thm1} holds. Since $B_{t}^{\lambda} = h_\lambda(\Phi^t)-f_\lambda^\star$, we have
    \begin{equation}
        \begin{aligned}
            B_{t}^{\lambda} &\le  B_{t-1}^{\lambda} - \frac{3\eta E}{8}\left\|\nabla h_\lambda(\Phi^{t-1})\right\|_F^2 + C_e\eta E\left\|e_{t-1}\right\|_F^2 + \frac{M_\lambda^2\eta E}{16B^2} \\
            &\le B_{0}^{\lambda} - \frac{3\eta E}{8} \sum_{s=0}^{t-1}
            \norm{\nabla h_\lambda(\Phi^s)}_F^2 + C_e\eta E \sum_{s=0}^{t-1} \norm{e_s}_F^2
            + \frac{(M^2+C(B^2-1)\lambda)\eta E}{16B^2} t
        \end{aligned}
    \end{equation}
    which completes the proof of Theorem~\ref{thm1}.

\end{proof}

\subsection{Proof of Corollary~\ref{col1}}

Since
\begin{equation}
\begin{aligned}
h_\lambda(\Phi) - h_0(\Phi) &= \min_\theta \left( f(\Phi, \theta) + \frac{\lambda}{2} \|\theta\|_F^2 \right) - \min_\theta f(\Phi, \theta) \\
&\leq f(\Phi, \theta^\star(\Phi)) + \frac{\lambda}{2} \|\theta^\star(\Phi)\|_F^2 - h(\Phi) \\
&= \frac{\lambda}{2} \|\theta^\star(\Phi)\|_F^2.
\end{aligned}
\end{equation}
note that $\norm{\theta^\star(\Phi^0)}_F \le B_\theta$, then
\begin{equation}
\begin{aligned}
    B_t - B_0 &= h(\Phi^t) - h(\Phi^0)  \\
        & \le  h_\lambda(\Phi^t) - h_\lambda(\Phi^0) + \frac\lambda2 \|\theta^\star(\Phi^0)\|_F^2 \\
        & \le h_\lambda(\Phi^t) - h_\lambda(\Phi^0) + \frac\lambda2 B_\theta^2
\end{aligned}
\end{equation}
According to Theorem~\ref{thm1}, we have
\begin{equation}
    B_t \le B_{0} - \frac{3\eta E}{8} \sum_{s=0}^{t-1} \norm{\nabla h_\lambda(\Phi^s)}_F^2 + C_e\eta E \sum_{s=0}^{t-1} \norm{e_s}_F^2 + \frac{(M^2+C(B^2-1)\lambda)\eta E}{16B^2} t  + \frac\lambda2 B_\theta^2,
\end{equation}
which completes the proof of Corollary~\ref{col1}.

\subsection{Proof of Theorem~\ref{thm:restricted_descent_relative}}
\begin{lem}
\label{g_property}
    Assume Assumption~\ref{asmp:attention_assumption} holds, $g_{\PhiFrozen}^\lambda(\theta)$ is $\lambda$-strongly convex and $L_g^\lambda$-smooth, where $L_g^\lambda = C_1^2 L_O + \lambda$.
\end{lem}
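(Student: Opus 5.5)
With the kernel frozen, the model is linear in $\theta$: $O(H;\PhiFrozen,\theta)=Z(H;\PhiFrozen)\,\theta$, where $Z(H;\PhiFrozen)=\varphi(HW^Q)\varphi(HW^K)^\top H$ is the quantity bounded in Lemma~\ref{lm0}. The plan is to write $g_{\PhiFrozen}^\lambda(\theta)=\sum_k p_k\E_{\mathcal D_k}[\ell(Z\theta,y)]+\frac{\lambda}{2}\norm{\theta}_F^2$ and treat the data-fit part and the regularizer separately. Convexity and smoothness of the data-fit part both come from composing the loss with a linear map.

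\emph{Convexity.} For fixed $(H,y)$, the map $\theta\mapsto \ell(Z\theta,y)$ is a convex function (by (\ref{asmp:5})) composed with a linear map. It is therefore convex. Expectations and nonnegative combinations over clients preserve convexity, so $g_{\PhiFrozen}(\theta)=g^0_{\PhiFrozen}(\theta)$ is convex. Adding $\frac{\lambda}{2}\norm{\theta}_F^2$ then gives $\lambda$-strong convexity, i.e.
\[
g_{\PhiFrozen}^\lambda(\theta')\ge g_{\PhiFrozen}^\lambda(\theta)+\inner{\nabla g_{\PhiFrozen}^\lambda(\theta)}{\theta'-\theta}+\frac{\lambda}{2}\norm{\theta'-\theta}_F^2 .
\]

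\emph{Smoothness.} I would apply the quadratic upper bound in (\ref{asmp:5}) with $O=Z\theta$ and $O'=Z\theta'$. The linear term becomes $\inner{Z^\top\nabla_O\ell(Z\theta,y)}{\theta'-\theta}$, which is exactly the gradient term in $\theta$. For the quadratic term, $\norm{Z(\theta'-\theta)}_F\le\norm{Z}_F\norm{\theta'-\theta}_F\le C_1\norm{\theta'-\theta}_F$, using Lemma~\ref{lm0}(1.1). This yields the descent-lemma inequality with constant $C_1^2L_O$ pointwise in $(H,y)$. Taking expectations and averaging over $k$ preserves it. The regularizer adds exactly $\lambda$, so the total constant is $L_g^\lambda=C_1^2L_O+\lambda$.

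Alternatively, smoothness in gradient-Lipschitz form follows directly from the first bound of Lemma~\ref{lm1} ($L_{\theta\theta}=C_1^2L_O$). That route needs a little care, because Lemma~\ref{lm1} is stated for $\norm{\theta}_F\le B_\theta$. The direct argument above avoids this restriction: it only requires that (\ref{asmp:5}) hold globally in $O$. So I would present the direct argument and note that it agrees with Lemma~\ref{lm1}.

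The main subtlety, though a mild one, is differentiating under the expectation and justifying the gradient form $\nabla g=\sum_kp_k\E[Z^\top\nabla_O\ell]+\lambda\theta$. This follows from boundedness ($\norm{Z}_F\le C_1$, $\norm{\nabla_O\ell}_F\le G_O$) together with dominated convergence. Everything else is routine.
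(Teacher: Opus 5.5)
Your argument is correct, and the convexity half coincides with the paper's. For smoothness, the paper works with gradients rather than function values. It writes $\nabla g_{\PhiFrozen}^\lambda(\theta)=\sum_k p_k\E_{\mathcal D_k}[Z^\top\nabla_O\ell(Z\theta,y)]+\lambda\theta$ with $Z=Z(H;\PhiFrozen)$. It then bounds $\norm{\nabla g_{\PhiFrozen}^\lambda(\theta)-\nabla g_{\PhiFrozen}^\lambda(\theta')}_F$ by $\sum_k p_k\E_{\mathcal D_k}\bigl[\norm{Z}_F\norm{\nabla_O\ell(Z\theta,y)-\nabla_O\ell(Z\theta',y)}_F\bigr]+\lambda\norm{\theta-\theta'}_F\le(C_1^2L_O+\lambda)\norm{\theta-\theta'}_F$. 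This uses only Lemma~\ref{lm0}(1.1), so, as in your direct argument, the $B_\theta$ restriction of Lemma~\ref{lm1} never enters. You instead push the quadratic upper bound of (\ref{asmp:5}) through the linear map $\theta\mapsto Z\theta$.

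Both routes rely on the same fact: for a convex differentiable function, the quadratic upper bound with constant $L$ is equivalent to an $L$-Lipschitz (indeed co-coercive) gradient. They apply it at different levels. The paper uses it tacitly on $\ell$: (\ref{asmp:5}) only states the quadratic upper bound, yet the proof needs $\norm{\nabla_O\ell(O,y)-\nabla_O\ell(O',y)}_F\le L_O\norm{O-O'}_F$. Your route uses (\ref{asmp:5}) verbatim but ends with only the function-value form for $g$.

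The later lemmas need the gradient forms, per client. The drift recursion in Lemma~\ref{bthetas} needs $\nabla g_{\PhiFrozen,k}^\lambda$ to be $L_g^\lambda$-Lipschitz, and Lemma~\ref{lem:gMB} uses the co-coercivity inequality for each $g_{\PhiFrozen,k}^\lambda$. So add one line. Your bound holds pointwise in $(H,y)$, hence for each convex $g_{\PhiFrozen,k}^0$ with constant $C_1^2L_O$. By the equivalence above, $g_{\PhiFrozen,k}^0$ then has a $C_1^2L_O$-Lipschitz gradient, and adding $\lambda\theta$ yields $L_g^\lambda$. With that line included, your version is arguably the tidier one, since it uses the assumption exactly in the form stated.
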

\begin{proof}
    By definition, we have
    \begin{equation}
        g_{\PhiFrozen}^\lambda(\theta) = \sum_{k=1}^K p_k \E_{\mathcal D_k} \Bigl[
        \ell\bigl(O(H;\PhiFrozen,\theta),y\bigr) \Bigr] + \frac{\lambda}{2}\norm{\theta}_F^2,
    \end{equation}

    According to Assumption~\ref{asmp:attention_assumption}, the loss function $\ell(O,y)$ is convex with respect to $O$, and $O$ is a linear function of $\theta$, so that $\ell$ is convex with respect to $\theta$, and $g_{\PhiFrozen}^{\lambda}(\theta)$ is $\lambda$-strongly convex. For smoothness, according to Lemma~\ref{lm0}, $\left\| Z(H; \Phi)\right\|_F \le C_1$, then we have
    \begin{equation}
    \begin{aligned}
         & \quad \norm{\nabla_\theta g_{\PhiFrozen}^\lambda (\theta)-\nabla_\theta g_{\PhiFrozen}^\lambda(\theta')}_F \\
        &= \norm{\sum_{k=1}^K p_k \E_{\mathcal D_k} \Bigl[ Z(H; \PhiFrozen)^\top \nabla_O \ell(Z(H; \PhiFrozen)\theta,y) +\lambda\theta - Z(H; \PhiFrozen)^\top \nabla_O \ell(Z(H; \PhiFrozen)\theta',y) - \lambda\theta'\Bigr]  }_F \\
        &\le  \sum_{k=1}^K p_k \E_{\mathcal D_k} \Bigl[\norm{Z(H; \PhiFrozen)}_F
        \norm{\nabla_O \ell(Z(H; \PhiFrozen)\theta,y)-\nabla_O \ell(Z(H; \PhiFrozen)\theta',y)}_F\Bigr] + \lambda \norm{\theta-\theta'}_F \\
        &\le C_1^2 L_O \norm{\theta - \theta'}_F +  \lambda \norm{\theta-\theta'}_F \\
        &= (C_1^2 L_O + \lambda) \norm{\theta-\theta'}_F
    \end{aligned}
    \end{equation}
    which yields the smoothness of $g_{\PhiFrozen}^{\lambda}(\theta)$.
\end{proof}

\begin{lem}
\label{lem:gMB}
    Assume Assumption~\ref{asmp:attention_assumption} holds, for the fixed frozen kernel $\PhiFrozen$, we have
    \begin{equation}
    \sum_{k=1}^K p_k
    \norm{\nabla g_{\PhiFrozen,k}^{\lambda}(\theta)}_F^2
    \le (M_{g}^\lambda)^{2} + (B_g^\lambda)^2
    \norm{\nabla g_{\PhiFrozen}^\lambda(\theta)}_F^2,
\end{equation}
where $(B_g^\lambda)^2 = 2(L_{\theta\theta}+\lambda) /\lambda$ and $(M_{g}^\lambda)^{2} =  2\sum_{k=1}^K p_k \norm{\nabla g_{\PhiFrozen,k}^{\lambda}(\theta_\lambda^\star(\PhiFrozen))}_F^2$. Furthermore, we have $(M_{g}^\lambda)^{2} \le 8(C_1G_O)^2$.
\end{lem}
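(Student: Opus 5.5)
The plan is to split each local gradient around the restricted minimizer $\theta^\star := \theta_\lambda^\star(\PhiFrozen)$, which exists and is unique by Lemma~\ref{g_property}. For each client $k$ write
\[
\nabla g_{\PhiFrozen,k}^{\lambda}(\theta) = \bigl(\nabla g_{\PhiFrozen,k}^{\lambda}(\theta)-\nabla g_{\PhiFrozen,k}^{\lambda}(\theta^\star)\bigr) + \nabla g_{\PhiFrozen,k}^{\lambda}(\theta^\star),
\]
and apply $\norm{a+b}_F^2\le 2\norm{a}_F^2+2\norm{b}_F^2$. Averaging with weights $p_k$, the second piece gives exactly $(M_g^\lambda)^2$. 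The remaining task is to bound the weighted average of $2\norm{\nabla g_{\PhiFrozen,k}^{\lambda}(\theta)-\nabla g_{\PhiFrozen,k}^{\lambda}(\theta^\star)}_F^2$ by $(B_g^\lambda)^2\norm{\nabla g_{\PhiFrozen}^\lambda(\theta)}_F^2$.

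For that difference term I will use co-coercivity rather than plain Lipschitzness, since Lipschitzness would only give $\norm{\theta-\theta^\star}_F^2$ with a wrong constant. Each $g_{\PhiFrozen,k}^\lambda$ is convex and $(L_{\theta\theta}+\lambda)$-smooth. Convexity holds because $\ell$ is convex in $O$ and $O=Z(H;\PhiFrozen)\theta$ is linear in $\theta$; smoothness follows from the argument of Lemma~\ref{g_property} applied per client, with $L_{\theta\theta}=C_1^2L_O$. Co-coercivity then gives
\[
\norm{\nabla g_{\PhiFrozen,k}^{\lambda}(\theta)-\nabla g_{\PhiFrozen,k}^{\lambda}(\theta^\star)}_F^2\le (L_{\theta\theta}+\lambda)\inner{\nabla g_{\PhiFrozen,k}^{\lambda}(\theta)-\nabla g_{\PhiFrozen,k}^{\lambda}(\theta^\star)}{\theta-\theta^\star}.
\]
Summing with weights $p_k$ makes the right side linear in the gradients, so it becomes $(L_{\theta\theta}+\lambda)\inner{\nabla g_{\PhiFrozen}^\lambda(\theta)}{\theta-\theta^\star}$, using $\nabla g_{\PhiFrozen}^\lambda(\theta^\star)=0$. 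By Cauchy–Schwarz and $\lambda$-strong convexity of $g_{\PhiFrozen}^\lambda$, we have $\norm{\theta-\theta^\star}_F\le \lambda^{-1}\norm{\nabla g_{\PhiFrozen}^\lambda(\theta)}_F$. Hence the difference term is at most $\frac{2(L_{\theta\theta}+\lambda)}{\lambda}\norm{\nabla g_{\PhiFrozen}^\lambda(\theta)}_F^2$, which matches $(B_g^\lambda)^2$. This aggregation step is the crux of the proof; per-client strong convexity is not needed, only per-client convexity.

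For the bound on $(M_g^\lambda)^2$, note that $\nabla g_{\PhiFrozen,k}^{\lambda}(\theta^\star)=\E_{\mathcal D_k}[Z^\top\nabla_O\ell]+\lambda\theta^\star$. The first part has norm at most $C_1G_O$ by Lemma~\ref{lm0} and Assumption~\ref{asmp:2}. For the second part, the first-order condition $\lambda\theta^\star=-\nabla_\theta g_{\PhiFrozen}(\theta^\star)$, an average of such terms, gives $\norm{\lambda\theta^\star}_F\le C_1G_O$, as in Lemma~\ref{lm:theta_star_bound}. So each client gradient at $\theta^\star$ has norm at most $2C_1G_O$, its square is at most $4(C_1G_O)^2$, and multiplying by 2 gives $(M_g^\lambda)^2\le 8(C_1G_O)^2$. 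This constant bound uses the gradient bound $G_O$ but not $\norm{\theta}_F\le B_\theta$.
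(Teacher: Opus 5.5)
Your proof is correct and essentially matches the paper's: the same split around $\theta_\lambda^\star(\PhiFrozen)$, per-client convexity and $(L_{\theta\theta}+\lambda)$-smoothness used to make the difference term linear in the client gradients so that averaging leaves only the global gradient, and the same $2C_1G_O$ bound on each $\norm{\nabla g_{\PhiFrozen,k}^{\lambda}(\theta_\lambda^\star(\PhiFrozen))}_F$. The only variation is in that middle step, and it is cosmetic. The paper uses the Bregman form of the convex--smooth inequality followed by $g_{\PhiFrozen}^{\lambda}(\theta)-h_\lambda(\PhiFrozen)\le \norm{\nabla g_{\PhiFrozen}^{\lambda}(\theta)}_F^2/(2\lambda)$, whereas you use the inner-product form of co-coercivity with Cauchy--Schwarz and $\norm{\theta-\theta_\lambda^\star(\PhiFrozen)}_F\le\lambda^{-1}\norm{\nabla g_{\PhiFrozen}^{\lambda}(\theta)}_F$, and both give exactly $(B_g^\lambda)^2=2(L_{\theta\theta}+\lambda)/\lambda$.
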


\begin{proof}

    Note that $g_{\PhiFrozen,k}^{\lambda}$ and $g_{\PhiFrozen}^{\lambda}$ are $(L_{\theta\theta}+\lambda)$-smooth and $\lambda$-strongly convex, we have:
    \begin{equation}
        \begin{aligned}
            \sum_{k=1}^K p_k \norm{\nabla g_{\PhiFrozen,k}^{\lambda}(\theta)}_F^2 &\le 2\sum_{k=1}^K p_k \left(\norm{\nabla g_{\PhiFrozen,k}^{\lambda}(\theta_\lambda^\star(\PhiFrozen))}_F^2+\norm{\nabla g_{\PhiFrozen,k}^{\lambda}(\theta)- \nabla g_{\PhiFrozen,k}^{\lambda}(\theta_\lambda^\star(\PhiFrozen))}_F^2 \right) \\
            &= 2\sum_{k=1}^K p_k \norm{\nabla g_{\PhiFrozen,k}^{\lambda}(\theta_\lambda^\star(\PhiFrozen))}_F^2 + 2\sum_{k=1}^K p_k \norm{\nabla g_{\PhiFrozen,k}^{\lambda}(\theta)- \nabla g_{\PhiFrozen,k}^{\lambda}(\theta_\lambda^\star(\PhiFrozen))}_F^2 \\
            &\le 2\sum_{k=1}^K p_k \norm{\nabla g_{\PhiFrozen,k}^{\lambda}(\theta_\lambda^\star(\PhiFrozen))}_F^2 \\ & \quad +     4\left(L_{\theta\theta}+\lambda\right)\sum_{k=1}^K p_k \left(g_{\PhiFrozen,k}^{\lambda}(\theta)-g_{\PhiFrozen,k}^{\lambda}(\theta_\lambda^\star(\PhiFrozen)) - \langle\nabla g_{\PhiFrozen,k}^{\lambda}(\theta_\lambda^\star(\PhiFrozen)), \theta - \theta_\lambda^\star(\PhiFrozen)\rangle\right) \\
            &= 2\sum_{k=1}^K p_k \norm{\nabla g_{\PhiFrozen,k}^{\lambda}(\theta_\lambda^\star(\PhiFrozen))}_F^2 + 4\left(L_{\theta\theta}+\lambda\right)\left(g_{\PhiFrozen}^{\lambda}(\theta)-g_{\PhiFrozen}^{\lambda}(\theta_\lambda^\star(\PhiFrozen))\right) \\
            &\le 2\sum_{k=1}^K p_k \norm{\nabla g_{\PhiFrozen,k}^{\lambda}(\theta_\lambda^\star(\PhiFrozen))}_F^2 + 4\left(L_{\theta\theta}+\lambda\right)\cdot \frac{1}{2\lambda} \norm{\nabla g_{\PhiFrozen}^{\lambda}(\theta)- \nabla g_{\PhiFrozen}^{\lambda}(\theta_\lambda^\star(\PhiFrozen))}_F^2 \\
            &=2\sum_{k=1}^K p_k \norm{\nabla g_{\PhiFrozen,k}^{\lambda}(\theta_\lambda^\star(\PhiFrozen))}_F^2 + \frac{2(L_{\theta\theta}+\lambda)}{\lambda}\norm{\nabla g_{\PhiFrozen}^{\lambda}(\theta)}_F^2
        \end{aligned}
    \end{equation}
    So that Lemma~\ref{lem:gMB} holds by letting
    \begin{equation}
    \begin{aligned}
        (B_g^\lambda)^2 &=   2(L_{\theta\theta}+\lambda) /\lambda  \\
        (M_{g}^\lambda)^{2} &=  2\sum_{k=1}^K p_k \norm{\nabla g_{\PhiFrozen,k}^{\lambda}(\theta_\lambda^\star(\PhiFrozen))}_F^2.
    \end{aligned}
    \end{equation}
    Furthermore, according to Assumption~\ref{asmp:attention_assumption} and Lemma~\ref{lm0}, we have
    \begin{equation}
        \begin{aligned}
            \norm{\nabla g_{\PhiFrozen,k}^{\lambda}(\theta_\lambda^\star(\PhiFrozen))}_F &= \norm{\nabla_\theta F_{k}(\PhiFrozen, \theta_\lambda^\star(\PhiFrozen))  +\lambda\theta_\lambda^\star(\PhiFrozen)}_F \\
            &=\norm{\nabla_\theta F_{k}(\PhiFrozen, \theta_\lambda^\star(\PhiFrozen))  -\nabla_\theta f(\PhiFrozen, \theta_\lambda^\star(\PhiFrozen))}_F \\
            &\le 2C_1G_O
        \end{aligned}
    \end{equation}
    then we have
    \begin{equation}
        \begin{aligned}
            (M_{g}^\lambda)^{2} \le 2\sum_{k =1 }^K p_k  (2C_1G_O)^2   =8(C_1G_O)^2
        \end{aligned}
    \end{equation}
\end{proof}

\begin{lem}
\label{bthetas}
    Assume Assumption~\ref{asmp:attention_assumption} holds, for the fixed kernel $\PhiFrozen$, when $L_{g}^{\lambda}\eta E\le \epsilon/(1+\epsilon)$ for any $\epsilon >0$, let the residual semantic drift at time $s$ be:
    \begin{equation}
        b_{\theta,\lambda}^s :=
    \frac1E\sum_{k=1}^K p_k \sum_{j=0}^{E-1}
    \Bigl(
        \nabla g_{\PhiFrozen,k}^{\lambda}(\theta_{k,j}^s)
        -
        \nabla g_{\PhiFrozen,k}^{\lambda}(\theta^s)
    \Bigr),
    \end{equation}
    then we have
    \begin{equation}
        \norm{b_{\theta, \lambda}^s}_F \le \epsilon\sqrt{(M_{g}^\lambda)^{2} + (B_g^\lambda)^2
    \norm{\nabla g_{\PhiFrozen}^\lambda(\theta^s)}_F^2}
    \end{equation}
\end{lem}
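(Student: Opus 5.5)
I would mirror the proof of Lemma~\ref{lm3}, replacing the full-model objective $F_k^\lambda$ by the restricted objective $g_{\PhiFrozen,k}^\lambda$ and the dissimilarity bound of Assumption~\ref{asmp:global_dissim} by Lemma~\ref{lem:gMB}.

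\textbf{Step 1 (local drift recursion).} Define the local drift $c_{k,j}^s := \theta_{k,j}^s - \theta^s$, so that $c_{k,0}^s = 0$. The Phase~2 update gives
\[
c_{k,j+1}^s = c_{k,j}^s - \eta\bigl(\nabla g_{\PhiFrozen,k}^\lambda(\theta_{k,j}^s)-\nabla g_{\PhiFrozen,k}^\lambda(\theta^s)\bigr) - \eta\nabla g_{\PhiFrozen,k}^\lambda(\theta^s).
\]
The argument in Lemma~\ref{g_property} applies to each client's $g_{\PhiFrozen,k}^\lambda$ individually, with the same constant $L_g^\lambda = C_1^2L_O+\lambda$, since it uses only $\|Z(H;\PhiFrozen)\|_F\le C_1$ and the $L_O$-smoothness of $\ell$. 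Note that no constraint $\|\theta\|_F\le B_\theta$ is needed here, because $O$ is linear in $\theta$.

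\textbf{Step 2 (unrolling and averaging).} By the triangle inequality, $\|c_{k,j+1}^s\|_F \le (1+\eta L_g^\lambda)\|c_{k,j}^s\|_F + \eta\|\nabla g_{\PhiFrozen,k}^\lambda(\theta^s)\|_F$. Unrolling this recursion gives
\[
\|c_{k,j}^s\|_F \le \frac{(1+\eta L_g^\lambda)^j - 1}{L_g^\lambda}\,\|\nabla g_{\PhiFrozen,k}^\lambda(\theta^s)\|_F.
\]
Then
\[
\|b_{\theta,\lambda}^s\|_F \le \frac{L_g^\lambda}{E}\sum_k p_k\sum_{j=0}^{E-1}\|c_{k,j}^s\|_F .
\]
Summing the geometric series yields
\[
\|b_{\theta,\lambda}^s\|_F \le \frac1E\Bigl(\frac{(1+\eta L_g^\lambda)^E-1}{\eta L_g^\lambda}-E\Bigr)\sum_k p_k\|\nabla g_{\PhiFrozen,k}^\lambda(\theta^s)\|_F .
\]

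\textbf{Step 3 (the elementary inequality; main obstacle).} The step requiring care is showing that the prefactor is at most $\epsilon$ under $x := \eta L_g^\lambda E \le \epsilon/(1+\epsilon)$. This is the same claim used in Lemma~\ref{lm3}, and I would verify it directly. The prefactor equals $\frac{1}{x}\sum_{i\ge2}\binom{E}{i}(x/E)^i \le \frac1x\sum_{i\ge 2} x^i = \frac{x}{1-x}$. Since $x\le\epsilon/(1+\epsilon)$ is equivalent to $x/(1-x)\le\epsilon$, the prefactor is at most $\epsilon$.

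\textbf{Step 4 (conclusion).} Apply Jensen's inequality, $\bigl(\sum_k p_k\|\nabla g_{\PhiFrozen,k}^\lambda(\theta^s)\|_F\bigr)^2 \le \sum_k p_k\|\nabla g_{\PhiFrozen,k}^\lambda(\theta^s)\|_F^2$. Then invoke Lemma~\ref{lem:gMB} to bound the right-hand side by $(M_g^\lambda)^2+(B_g^\lambda)^2\|\nabla g_{\PhiFrozen}^\lambda(\theta^s)\|_F^2$. Taking square roots gives the stated bound.
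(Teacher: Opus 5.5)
Your proposal is correct and follows the paper's proof step for step. It uses the same local drift recursion, the same unrolling, the same geometric-series summation, the same Jensen/Cauchy--Schwarz step and the same final appeal to Lemma~\ref{lem:gMB}; you additionally spell out, via the binomial expansion, the elementary bound $(1+\eta L_g^\lambda)^E-1\le(1+\epsilon)\eta L_g^\lambda E$ that the paper only asserts, and you state explicitly that the smoothness argument of Lemma~\ref{g_property} applies to each client's $g_{\PhiFrozen,k}^\lambda$.
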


\begin{proof}
    Let $d_{k,j}^s = \theta_{k,j}^s-\theta^s$, based on Algorithm~\ref{alg:fedfrozen}, we have
    \begin{equation}
    d_{k,j+1}^s = d_{k,j}^s -
    \eta\bigl(\nabla g_{\PhiFrozen,k}^\lambda(\theta_{k,j}^s)-\nabla g_{\PhiFrozen,k}^\lambda(\theta^s)\bigr) -
    \eta \nabla g_{\PhiFrozen,k}^\lambda(\theta^s).
    \end{equation}
    According to Lemma~\ref{g_property}, $g_{\PhiFrozen,k}^\lambda$ is $L_g^\lambda$ smooth, then we have
    \begin{equation}
    \norm{d_{k,j+1}^s}_F \le (1+L_g^\lambda \eta)\norm{d_{k,j}^s}_F
    + \eta \norm{\nabla g_{\PhiFrozen,k}^\lambda(\theta^s)}_F.
    \end{equation}
    Since $d_{k,0}^s=0$,  we have
    \begin{equation}
    \norm{d_{k,j}^s}_F \le \frac{(1+L_g^\lambda \eta)^j-1}{L_g^\lambda}
    \norm{\nabla g_{\PhiFrozen,k}^\lambda(\theta^s)}_F.
    \end{equation}
    When $L_{g}^{\lambda}\eta E\le \epsilon/(1+\epsilon)$,  we  have $(1+ \eta L_g^\lambda)^E -1 \le (1+ \epsilon)\eta L_g^\lambda E$, then,
    \begin{equation}
        \begin{aligned}
            \norm{b_{\theta, \lambda}^s}_F &\le \frac1E \sum_{k=1}^K p_k \sum_{j=0}^{E-1} \norm{\nabla g_{\PhiFrozen,k}^\lambda(\theta_{k,j}^s)-\nabla g_{\PhiFrozen,k}^\lambda(\theta^s)}_F  \\
            &\le \frac{L_g^\lambda}{E} \sum_{k=1}^K p_k \sum_{j=0}^{E-1} \norm{d_{k,j}^s}_F  \\
            &\le \frac{L_g^\lambda}E \sum_{k=1}^K p_k \sum_{j=0}^{E-1} \frac{(1+L_g^\lambda \eta)^j-1}{L_g^\lambda} \norm{\nabla g_{\PhiFrozen,k}^\lambda(\theta^s)}_F \\
            &=  \left(\frac{(1+L_g^\lambda \eta)^E-1}{L_g^\lambda \eta E} - 1 \right) \sum_{k=1}^K p_k \left\|\nabla g_{\PhiFrozen,k}^\lambda(\theta^s)\right\|_F \\
            &\le \epsilon  \sum_{k=1}^K p_k \left\|\nabla g_{\PhiFrozen,k}^\lambda(\theta^s)\right\|_F
        \end{aligned}
    \end{equation}
    According to Cauchy-Schwarz Inequality, we have
    \begin{equation}
        \norm{b_{\theta, \lambda}^s}_F^2 \le \left( \epsilon  \sum_{k=1}^K p_k \left\|\nabla g_{\PhiFrozen,k}^\lambda(\theta^s)\right\|_F \right)^2 \le \epsilon ^2  \sum_{k=1}^K p_k \left\|\nabla g_{\PhiFrozen,k}^\lambda(\theta^s)\right\|_F^2
    \end{equation}
    Applying Lemma~\ref{lem:gMB}, we have
    \begin{equation}
        \norm{b_{\theta, \lambda}^s}_F \le \epsilon  \sqrt{(M_{g}^\lambda)^{2} + (B_g^\lambda)^2
    \norm{\nabla g_{\PhiFrozen}^\lambda(\theta^s)}_F^2}
    \end{equation}
\end{proof}

Now we are ready to prove Theorem~\ref{thm:restricted_descent_relative}.
\begin{proof}
    Since
    \begin{equation}
        \theta^{s+1}-\theta^s = -\eta E\bigl(\nabla g_{\PhiFrozen}^\lambda(\theta^s)+b_{\theta,\lambda}^s\bigr).
    \end{equation}
    According to Lemma~\ref{g_property}, $g_{\PhiFrozen}^\lambda(\theta)$ is $L_g^\lambda$ smooth, when $\eta \le 1/(L_g^\lambda E)$, we have
    \begin{equation}
        \begin{aligned}
            g_{\PhiFrozen}^\lambda(\theta^{s+1}) &\le g_{\PhiFrozen}^\lambda(\theta^s) +  \inner{\nabla g_{\PhiFrozen}^\lambda(\theta^s)}{\theta^{s+1}-\theta^s} +  \frac{L_g^\lambda}{2}\norm{\theta^{s+1}-\theta^s}_F^2 \\
            &= g_{\PhiFrozen}^\lambda(\theta^s) - \eta E \norm{\nabla g_{\PhiFrozen}^\lambda(\theta^s)}_F^2 - \eta E\inner{\nabla g_{\PhiFrozen}^\lambda(\theta^s)}{b_{\theta,\lambda}^s} + \frac{L_g^\lambda \eta^2E^2}{2}\norm{ \nabla g_{\PhiFrozen}^\lambda(\theta^s)+b_{\theta,\lambda}^s}_F^2 \\
            &\le g_{\PhiFrozen}^\lambda(\theta^s) - \eta E \norm{\nabla g_{\PhiFrozen}^\lambda(\theta^s)}_F^2 - \eta E\inner{\nabla g_{\PhiFrozen}^\lambda(\theta^s)}{b_{\theta,\lambda}^s} + \frac{\eta E}{2}\norm{ \nabla g_{\PhiFrozen}^\lambda(\theta^s)+b_{\theta,\lambda}^s}_F^2 \\
            &= g_{\PhiFrozen}^\lambda(\theta^s) - \frac{\eta E}2 \norm{\nabla g_{\PhiFrozen}^\lambda(\theta^s)}_F^2  +\frac{\eta E}{2}\norm{b_{\theta,\lambda}^s}_F^2.
        \end{aligned}
    \end{equation}

    When $L_{g}^{\lambda}\eta E\le \epsilon /(1+ \epsilon)$, by Lemma~\ref{bthetas},
    \begin{equation}
        \norm{b_{\theta, \lambda}^s}_F \le \epsilon  \sqrt{(M_{g}^\lambda)^{2} + (B_g^\lambda)^2
    \norm{\nabla g_{\PhiFrozen}^\lambda(\theta^s)}_F^2},
    \end{equation}
    By letting $\epsilon = 1/ 2 B_g^\lambda$, according to Lemma~\ref{lem:gMB}, we have
    \begin{equation}
        \begin{aligned}
            g_{\PhiFrozen}^\lambda(\theta^{s+1}) &\le g_{\PhiFrozen}^\lambda(\theta^{s}) - \frac{\eta E}{2} \norm{\nabla g_{\PhiFrozen}^\lambda(\theta^s) }_F^2 + \frac
            {\eta E}2\cdot \epsilon^2  \left( (M_{g}^\lambda)^{2} + (B_g^\lambda)^2 \norm{\nabla g_{\PhiFrozen}^\lambda(\theta^s)}_F^2 \right) \\
            &= g_{\PhiFrozen}^\lambda(\theta^{s}) - \frac{3\eta E}8  \norm{\nabla g_{\PhiFrozen}^\lambda(\theta^s) }_F^2 + \frac{\eta E (M_{g}^\lambda)^{2}}{8(B_g^\lambda)^2} \\
            &\le g_{\PhiFrozen}^\lambda(\theta^{s}) - \frac{3\eta E}8  \norm{\nabla g_{\PhiFrozen}^\lambda(\theta^s) }_F^2 +\frac{\eta E \lambda (C_1G_O)^{2}}{2(L_{\theta\theta}+ \lambda)} \\
            &\le g_{\PhiFrozen}^\lambda(\theta^{s}) - \frac{3\eta E}8  \norm{\nabla g_{\PhiFrozen}^\lambda(\theta^s) }_F^2 +\frac{\eta E (C_1G_O)^{2}}{2}
        \end{aligned}
    \end{equation}
    Let $C = C_1G_O$, then we have
    \begin{equation}
         g_{\PhiFrozen}^\lambda(\theta^{s+1}) \le g_{\PhiFrozen}^\lambda(\theta^{s}) - \frac{3}{8}  \eta E \norm{\nabla g_{\PhiFrozen}^\lambda(\theta^s) }_F^2 + \frac{C^2 \eta E}{2}
    \end{equation}

    Summing over $s=0,\dots,S-1$ and using $g_{\PhiFrozen}^\lambda(\theta^S)\ge h_\lambda(\PhiFrozen)$ yields
    \begin{equation}
        \frac{1}{S}\sum_{s=0}^{S-1}\norm{\nabla g_{\PhiFrozen}^\lambda (\theta^s)}_F^2 \le \frac{8(g_{\PhiFrozen}^\lambda(\theta^{0}) - h_\lambda(\PhiFrozen))}{3 \eta E S} + \frac{4C^2}{3}
    \end{equation}
    Based on the above, we let $c= \bigl((C_1^2L_O+\lambda)(2\sqrt{2(L_{\theta\theta}+\lambda)}+\sqrt{\lambda})\bigr)^{-1}$, then Theorem~\ref{thm:restricted_descent_relative} holds when $\eta \le c \sqrt{\lambda}/E$. Notably, $c$ is treated as a small constant because if we additionally have $\lambda\le B_\lambda$, it suffices to take $c = \bigl((C_1^2L_O+B_\lambda)(2\sqrt{2(L_{\theta\theta}+ B_\lambda)}+ B_\lambda)\bigr)^{-1}$.
\end{proof}

\subsection{Proof of Corollary~\ref{col:restricted_linear}}
\begin{proof}
    According to Lemma~\ref{g_property}, $g_{\PhiFrozen}^\lambda(\theta)$ is $\lambda$-strongly convex, then we have
    \begin{equation}
        \begin{aligned}
            \norm{\nabla g_{\PhiFrozen}^\lambda(\theta)}_F^2 \ge 2 \lambda (g_{\PhiFrozen}^\lambda(\theta) - h_\lambda(\PhiFrozen))
        \end{aligned}
    \end{equation}
    Substituting this into Theorem~\ref{thm:restricted_descent_relative} yields
    \begin{equation}
        \begin{aligned}
            g_{\PhiFrozen}^\lambda(\theta^{s+1})  -h_\lambda (\PhiFrozen) &\le g_{\PhiFrozen}^\lambda(\theta^{s})  -h_\lambda(\PhiFrozen) - \frac{3}{8}  \eta E \norm{\nabla g_{\PhiFrozen}^\lambda(\theta^s) }_F^2 + \frac{C'^2 \eta E}{2}\\
            &\le g_{\PhiFrozen}^\lambda(\theta^{s})- h_\lambda(\PhiFrozen) -\frac{3}{8}\eta E \cdot 2\lambda\left(g_{\PhiFrozen}^\lambda(\theta^s)- h_\lambda(\PhiFrozen)\right)+ \frac{C'^2 \eta E}{2}  \\
            &=\left(1-\frac{3}{4}\lambda\eta E\right)\left(g_{\PhiFrozen}^\lambda(\theta^{s})-h_\lambda(\PhiFrozen)\right)+ \frac{C'^2 \eta E}{2}.
        \end{aligned}
    \end{equation}
    Let $1-\rho_\lambda=\frac34\lambda\eta E$. When $0< 3\lambda \eta  E /4 <1 $, we have
    \begin{equation}
    \label{prop_proof}
    \begin{aligned}
        g_{\PhiFrozen}^\lambda(\theta^S) - h_\lambda(\PhiFrozen)
        &\le \left(1-\frac{3}{4}\lambda\eta E \right)^S \left( g_{\PhiFrozen}^\lambda(\theta^0) - h_\lambda(\PhiFrozen) \right)
        + \frac{C'^2 \eta E}{2(1-\rho_\lambda)}
        (1-\rho_\lambda^S) \\
        &\le \left(1-\frac{3}{4}\lambda\eta E \right)^S \left( g_{\PhiFrozen}^\lambda(\theta^0) - h_\lambda(\PhiFrozen) \right) +
        \frac{2C'^2}{3\lambda}.
    \end{aligned}
    \end{equation}
    According to the definition, $g_{\PhiFrozen}(\theta) = g_{\PhiFrozen}^0 (\theta)$ and $h(\Phi) = h_0(\Phi)$, and note that $\PhiFrozen = \Phi^{T_{\rm warm}}$, then
    \begin{equation}
        \begin{aligned}
            g_{\PhiFrozen}(\theta^S) - h(\PhiFrozen) &\le  g_{\PhiFrozen}^\lambda(\theta^S) - h_\lambda(\PhiFrozen) +\frac\lambda 2\norm{\theta^\star(\PhiFrozen)}_F^2\\
            &\le  g_{\PhiFrozen}^\lambda(\theta^S) - h_\lambda(\PhiFrozen) + \frac{\lambda B_\theta^2}{2} \\
            &\le \left(1-\frac{3}{4}\lambda\eta E \right)^S \left( g_{\PhiFrozen}^\lambda(\theta^0) - h_\lambda(\PhiFrozen) \right) +
        \frac{2C'^2}{3\lambda} + \frac{\lambda B_\theta^2}{2},
        \end{aligned}
    \end{equation}
    Let $C = 7C'/6$ and $\lambda_0 = (6C)/(7B_\theta)$. Then we have
    \begin{equation}
        \begin{aligned}
             g_{\PhiFrozen}(\theta^S) - h(\PhiFrozen) &\le \left(1-\frac{3}{4}\lambda_0 \eta E \right)^S \left( g_{\PhiFrozen}^{\lambda_0}(\theta^0) - h_{\lambda_0}(\PhiFrozen) \right) + C B_\theta
        \end{aligned}
    \end{equation}
    so that Corollary~\ref{col:restricted_linear} holds when $\eta \le \min \{c \sqrt{\lambda_0}/E , 4 /(3 \lambda_0 E) \}$ for constant $c$ defined in Theorem~\ref{thm:restricted_descent_relative}.
\end{proof}

\subsection{Proof of Proposition~\ref{prop:e2e_certificate}}

\begin{proof}
Fix a candidate freezing round $\tau\in\{0,\dots,T-1\}$ and set
$S=T-\tau$.  According to the condition,
\begin{equation}
    f_\lambda^\star \le f^\star+ \frac\lambda 2 \norm{\theta^\star}_F^2\le f^\star+\frac{\lambda}{2}B_\theta^2,
\end{equation}
then we have
\begin{equation}
    f(w_{\tau,T})-f^\star \le f_\lambda(w_{\tau,T})-f_\lambda^\star + \frac{\lambda}{2}B_\theta^2 .
\end{equation}
Since the kernel is fixed at $\Phi^\tau$ during the  Phase~2 procedure, then we have
\begin{equation}
    f_\lambda(w_{\tau,T}) = g_{\Phi^\tau}^\lambda(\theta_{\tau,S}).
\end{equation}
Hence
\begin{equation}
    \begin{aligned}
        f_\lambda(w_{\tau,T})-f_\lambda^\star
    &= \Bigl( g_{\Phi^\tau}^\lambda(\theta_{\tau,S}) -
        h_\lambda(\Phi^\tau) \Bigr) + \Bigl(
        h_\lambda(\Phi^\tau)-f_\lambda^\star \Bigr)  \\
    &= \Bigl( g_{\Phi^\tau}^\lambda(\theta_{\tau,S}) -
        h_\lambda(\Phi^\tau) \Bigr) + B_\tau^\lambda .
    \end{aligned}
\end{equation}
According to the proof of Corollary~\ref{col:restricted_linear}, we have
\begin{equation}
    f(w_{\tau,T})-f^\star \le B_\tau^{\lambda_0} +
    \rho_{\lambda_0}^{T-\tau}R_\tau^{\lambda_0} + C B_\theta,
\end{equation}
where $\lambda_0 = (6C)/(7B_\theta)$ and $C>0$ is a positive constant defined in Corollary~\ref{col:restricted_linear}.
\end{proof}

\subsection{Remarks of Assumption~\ref{asmp:attention_assumption}}
\begin{remark}
    Assumption~\ref{asmp:4} is satisfied by several commonly used smooth and
bounded feature maps. In particular, for any row vector $u$, the row-wise softmax feature map
\begin{equation}
    \varphi(u)_i = \frac{\exp(u_i)}{\sum_{j=1}^m \exp(u_j)}.
\end{equation}
is bounded and has a bounded Lipschitz-continuous Jacobian.
\end{remark}

\begin{remark}
Assumption~\ref{asmp:5} is satisfied by commonly used loss functions after a fixed
linear prediction map is applied to the matrix-valued attention output $O$.

For regression, the mean squared error loss with a fixed prediction matrix $W$
\begin{equation}
    \ell(O,y)
    =
    \frac12\bigl(\langle O,W\rangle-y\bigr)^2;
\end{equation}
For classification,  the softmax cross-entropy loss with a fixed
linear prediction vector $V$
\begin{equation}
    \ell(O,y)
    =
    -\sum_{i=1}^m y_i\log \operatorname{softmax}(OV)_i.
\end{equation}
\end{remark}

\section{Simulation Settings and Additional Results}
\subsection{Simulation Settings}
All simulation experiments use the federated linear-attention setup described in the main text. Unless otherwise stated, the system consists of $K=10$ clients, model dimensions $d=64$ and $d_k=d_v=32$, $n=16$ tokens per example, and $40$ examples per client. Each client distribution is generated by drawing a client-specific mean $\mu_k\sim\mathcal{N}(0,\rho^2 I_d)$ and then sampling token embeddings $H\sim\mathcal{N}(\mu_k,I_d)$. Labels are produced by a shared single-layer linear-attention global model with additive Gaussian noise of standard deviation $10^{-2}$. The local models minimize the mean squared error with ridge regularization $\lambda=10^{-3}$ applied to the value block.

We use full client participation and uniform server averaging across all simulations. During each communication round, clients perform $E$ local epochs using full-batch SGD. The default training consists of $T=200$ communication rounds and $E=20$ local epochs. Reported curves summarize ten random seeds using medians, with interquartile ranges shaded.

\begin{table}[h]
\centering
\caption{Default simulation hyperparameters.}
\label{tab:simulation_hyperparameters}
\begin{tabular}{lc}
\toprule
Hyperparameter & Value \\
\midrule
Number of clients $K$ & $10$ \\
Embedding dimension $d$ & $64$ \\
Query/key dimension $d_k$ & $32$ \\
Value dimension $d_v$ & $32$ \\
Tokens per example $n$ & $16$ \\
Examples per client & $40$ \\
Noise standard deviation & $10^{-2}$ \\
Ridge coefficient $\lambda$ & $10^{-3}$ \\
Communication rounds $T$ & $200$ \\
Local epochs $E$ & $20$ \\
Client weights & Uniform \\
\addlinespace[1.5pt]
Seeds & $0,\ldots,9$ \\
\bottomrule
\end{tabular}
\end{table}

The phase-profile experiment in Figure~\ref{fig:sim_phase1_profile} uses a learning rate $\eta=10^{-3}$. The warm-up trade-off and local-epoch experiments use $\eta=2\times 10^{-4}$. For evaluating the sensitivity to $\rho$ in Figure~\ref{fig:sim_main_results}, baseline methods follow the exact same training configuration, with the proximal term for FedProx specifically set to $\mu=200$.

\subsection{Additional Simulation Results}

Figure~\ref{fig:app_supp_simulation} presents two additional simulation experiments to further validate our approach. First, we examine the stability of the warm-up trade-off across varying heterogeneity levels. We evaluate \modelname across $\rho\in\{0,0.5,1.0,1.5,2.0\}$ and warm-up ratios $T_{\rm warm}/T\in\{0,0.25,0.5,0.75,1.0\}$, while maintaining $T=200$, $E=20$ local epochs, and $\eta=2\times10^{-4}$. Each heatmap entry represents the median final loss over ten random seeds on a $\log_{10}$ scale, with stars indicating the optimal warm-up fraction for each heterogeneity level. Second, we analyze the sensitivity to the $\ell_2$ regularization penalty on the value block by testing $\lambda\in\{0,10^{-1},1,10,100\}$, reporting both the final loss and the average client drift over the final 50 communication rounds. The results are shown in Figure~\ref{fig:app_supp_simulation}.

\begin{figure}[h]
    \centering
    \begin{minipage}{0.48\textwidth}
        \centering
        \includegraphics[width=\linewidth]{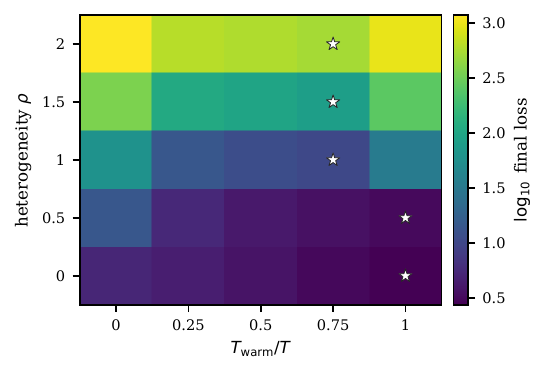}
        \centerline{(a) Warm-up robustness}
    \end{minipage}
    \hfill
    \begin{minipage}{0.48\textwidth}
        \centering
        \includegraphics[width=\linewidth]{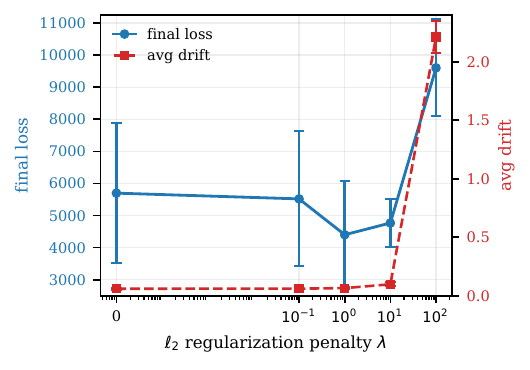}
        \centerline{(b) $\ell_2$ penalty effect}
    \end{minipage}
    \caption{
    (a) Warm-up robustness across
    heterogeneity levels $\rho$. (b) Impact of the value-block $\ell_2$ regularization penalty $\lambda$ on final loss and average client drift.
    }
    \label{fig:app_supp_simulation}
\end{figure}

Figure~\ref{fig:app_supp_simulation} presents two additional sets of simulation experiments to further validate our approach. By evaluating the median final loss across varying warm-up fractions ($T_{\rm warm}/T$), Panel~(a) corroborates the bias--drift interpretation discussed in the main text. Under low heterogeneity (small $\rho$), client drift is minimal, allowing a prolonged warm-up phase to continuously refine the shared attention kernel. However, as heterogeneity increases ($\rho\ge 1$), intermediate freezing strictly outperforms full-model training (i.e., $T_{\rm warm}/T=1.0$). This confirms that continuous updates to the query/key block under non-IID data eventually accumulate heterogeneity-induced drift. Furthermore, Panel~(b) illustrates the impact of the value-block $\ell_2$ regularization penalty $\lambda$ on both the final loss and average client drift. This reveals a complementary regularization trade-off: a moderate $\ell_2$ penalty effectively stabilizes the value-block optimization, whereas an excessively large penalty causes underfitting and subsequently amplifies client drift.

\section{Additional Real-Data Experimental Details}
\label{app:real_data_settings}

\subsection{Data partitioning.}
To ensure a fair evaluation, all methods are evaluated under identical conditions for each dataset, sharing the same random seed (fixed at $42$), data partitions, and model initialization. To simulate data heterogeneity, we construct class-wise Dirichlet non-IID partitions from the subsampled training sets. Specifically, samples from each class are randomly shuffled and distributed among the $K$ clients according to a $\mathrm{Dirichlet}(\alpha \mathbf{1}_K)$ distribution, yielding the local dataset $\mathcal{D}_k$ for each client $k$. We control the heterogeneity level via the concentration parameter $\alpha \in \{0.1, 0.3, 0.5\}$, where smaller values induce stronger data imbalance. Besides, we assume full client participation across all datasets, utilizing subsets of 5,000 training and 2,000 testing samples for each setting. Specifically, CIFAR-10 (10 classes) and FEMNIST (62 classes) are partitioned among 10 clients using local batch sizes of 32 and 64, respectively, while CIFAR-100 (100 classes) is distributed across 20 clients with a batch size of 32. To guarantee valid local training, any partition assigning fewer than the minimum required samples to a client is discarded and resampled.

\subsection{Model architectures and preprocessing.}
Table~\ref{tab:real_data_preprocessing} details the selected Vision Transformer (ViT) backbones, input resolutions, and dataset-specific preprocessing steps. Notably, to ensure robust feature extraction, all backbones are initialized with ImageNet-pretrained weights and equipped with tailored classification heads suited for their respective tasks.

\begin{table}[htbp!]
\centering
\scriptsize
\setlength{\tabcolsep}{4pt}
\caption{Model architectures and preprocessing pipelines. All backbones are initialized with ImageNet-pretrained weights and tailored classification heads.}
\label{tab:real_data_preprocessing}
\resizebox{\textwidth}{!}{%
\arrayrulecolor{black!25}
\begin{tabular}{@{}lccc p{0.5\textwidth}@{}}
\arrayrulecolor{black}
\toprule
Dataset & Backbone & Image size & Normalization & Preprocessing \\
\midrule
CIFAR-10
& ViT-B/32
& $224 \times 224$
& CIFAR-10 mean/std
& Resize to $224 \times 224$, normalize. \\
\arrayrulecolor{black!25}\hdashline\arrayrulecolor{black}
FEMNIST
& ViT-Small/16
& $224 \times 224$
& EMNIST mean/std
& Fix orientation, grayscale to RGB, resize to $224 \times 224$, normalize.  \\
\arrayrulecolor{black!25}\hdashline\arrayrulecolor{black}
\addlinespace[1.5pt]
CIFAR-100
& ViT-B/32
& $224 \times 224$
& CIFAR-100 mean/std
& Resize to $224 \times 224$, normalize.  \\
\arrayrulecolor{black}
\bottomrule
\end{tabular}%
}
\end{table}

\subsection{Baseline Hyperparameters}
To maintain a rigorous comparison, we standardize the training budget across all real-data experiments to $T=10$ global communication rounds, with each participating client performing $E=10$ local epochs per round. To facilitate stable convergence, the local learning rate follows a cosine annealing schedule applied over these communication rounds, decaying from the baseline-specific initial values detailed in Table~\ref{tab:real_data_method_hparams}. The global model is systematically evaluated on the central test set at the end of every round. Unless otherwise specified, local training utilizes the AdamW optimizer with a base learning rate of $10^{-4}$ and a weight decay of $0.01$. Table~\ref{tab:real_data_method_hparams} provides a comprehensive breakdown of these hyperparameter configurations, including algorithm-specific adjustments (e.g., server-side learning rates, proximal terms) and the dataset-specific warm-up schedules tailored for our proposed \modelname.

\begin{table}[htbp!]
\centering
\scriptsize
\setlength{\tabcolsep}{2.5pt}
\caption{Hyperparameter configurations for all federated learning methods evaluated on real-world datasets. The table details local optimizers, learning rates, and method-specific settings to ensure reproducible and fair comparisons.}
\label{tab:real_data_method_hparams}
\resizebox{\textwidth}{!}{%
\arrayrulecolor{black!25}
\begin{tabular}{@{}llcccl@{}}
\arrayrulecolor{black}
\toprule
Dataset & Method & Local optimizer & Local LR & Weight decay & Method-specific settings \\
\midrule
\multirow{7}{*}{CIFAR-10}
& FedAvg & AdamW & $10^{-4}$ & $0.01$ & Standard weighted averaging \\
& FedProx & AdamW & $10^{-4}$ & $0.01$ & $\mu=0.01$ \\
& SCAFFOLD & SGD & $0.1$ & $0$ & Momentum $0$; client/server control variates \\
& FedAvgM & AdamW & $10^{-4}$ & $0.01$ & Server LR $1.0$, server momentum $0.9$ \\
& FedAdam & AdamW & $10^{-4}$ & $0.01$ & Server LR $10^{-3}$, $\beta_1=0.9$, $\beta_2=0.99$, $\tau=10^{-3}$ \\
& FedNova & AdamW & $10^{-4}$ & $0.01$ & Normalized local update aggregation, server LR $1.0$ \\
& \modelname & AdamW & $10^{-4}$ & $0.01$ & Warm-up $5$ rounds; freeze Q/K from round $6$ \\
\arrayrulecolor{black!25}\hdashline\arrayrulecolor{black}
\multirow{7}{*}{FEMNIST}
& FedAvg & AdamW & $10^{-4}$ & $0.01$ & Standard weighted averaging \\
& FedProx & AdamW & $10^{-4}$ & $0.01$ & $\mu=0.01$ \\
& SCAFFOLD & SGD & $0.003$ & $0$ & Momentum $0.9$; client/server control variates \\
& FedAvgM & AdamW & $10^{-4}$ & $0.01$ & Server LR $1.0$, server momentum $0.9$ \\
& FedAdam & AdamW & $10^{-4}$ & $0.01$ & Server LR $10^{-2}$, $\beta_1=0.9$, $\beta_2=0.99$, $\tau=10^{-3}$ \\
& FedNova & AdamW & $10^{-4}$ & $0.01$ & Normalized local update aggregation, server LR $1.0$ \\
& \modelname & AdamW & $10^{-4}$ & $0.01$ & Warm-up rounds $(2,5,5)$ for $\alpha=(0.1,0.3,0.5)$ \\
\arrayrulecolor{black!25}\hdashline\arrayrulecolor{black}
\multirow{7}{*}{CIFAR-100}
& FedAvg & AdamW & $10^{-4}$ & $0.01$ & Standard weighted averaging \\
& FedProx & AdamW & $10^{-4}$ & $0.01$ & $\mu=0.01$ \\
& SCAFFOLD & SGD & $0.003$ & $0$ & Momentum $0.9$; client/server control variates \\
& FedAvgM & AdamW & $10^{-4}$ & $0.01$ & Server LR $1.0$, server momentum $0.9$ \\
& FedAdam & AdamW & $10^{-4}$ & $0.01$ & Server LR $10^{-3}$, $\beta_1=0.9$, $\beta_2=0.99$, $\tau=10^{-3}$ \\
& FedNova & AdamW & $10^{-4}$ & $0.01$ & Normalized local update aggregation, server LR $1.0$ \\
\addlinespace[1.5pt]
& \modelname & AdamW & $10^{-4}$ & $0.01$ & Warm-up $3$ rounds; freeze Q/K from round $4$ \\
\arrayrulecolor{black}
\bottomrule
\end{tabular}%
}
\end{table}

\section{Computing Resources}
All experiments were conducted on a high-performance server. GPU acceleration is provided by 8 NVIDIA H800 GPUs, each equipped with 80 GiB of VRAM. The system's CPU processing is handled by dual Intel Xeon Platinum 8480CL processors, delivering a total of 112 physical cores and 224 logical threads. Additionally, the setup includes 2.0 TiB of system memory to support large-scale federated data loads, with all deep learning operations executed using the CUDA 12.8 toolkit.

\section{Existing Asset Licenses}
\label{app:asset_licenses}
We use only publicly available benchmark datasets, model implementations, and pretrained weights. CIFAR-10 and CIFAR-100 are public research benchmark datasets released by the original authors and cited through the CIFAR technical report~\citep{krizhevsky2009learning}; the UCI listing for CIFAR-10 records a CC BY 4.0 license. FEMNIST is obtained through the LEAF benchmark suite~\citep{caldas2018leaf}, whose code is released under a BSD-2-Clause license; the underlying EMNIST data are publicly distributed by NIST for research use. Vision Transformer backbones and pretrained weights are accessed through standard model libraries: torchvision is released under a BSD-style license, and timm is released under Apache-2.0. We do not redistribute the raw datasets or ImageNet data; the supplementary code uses these assets through their standard public download and library interfaces.


\end{document}